\documentclass{article}
\usepackage{iclr2027_conference,times}

\usepackage{amsmath,amsfonts,bm}

\def\eqref#1{equation~\ref{#1}}

\def\1{\bm{1}}

\DeclareMathAlphabet{\mathsfit}{\encodingdefault}{\sfdefault}{m}{sl}
\SetMathAlphabet{\mathsfit}{bold}{\encodingdefault}{\sfdefault}{bx}{n}

\definecolor{cvprblue}{rgb}{0.21,0.49,0.74}
\definecolor{highlightred}{gray}{0.9}
\usepackage{pifont}
\usepackage[utf8]{inputenc} 
\usepackage[T1]{fontenc}    
\usepackage[pagebackref,breaklinks,colorlinks,citecolor=cvprblue]{hyperref}
\usepackage{amssymb}
\usepackage{amsthm}
\usepackage{mathtools}
\usepackage{booktabs}
\usepackage{multirow}
\usepackage{graphicx}
\usepackage{wrapfig}
\usepackage{cleveref}
\crefname{equation}{Eq.}{Eqs.}
\usepackage{algorithm}
\usepackage{algpseudocode}

\makeatletter
\renewcommand{\eqref}[1]{\textup{(\ref{#1})}}

\newtheorem{lemma}{Lemma}
\newtheorem{theorem}{Theorem}
\newtheorem{corollary}{Corollary}

\newcommand{\TEV}{\textsc{TEV}}

\newcommand{\pa}{\mathrm{pa}}
\newcommand{\Ch}{\mathrm{Ch}}
\newcommand{\depth}{\mathrm{d}}

\title{Where Draft Trees Lose Target Mass: Exit-Guided Speculative Decoding}

\iclrfinalcopy

\author{
Shijing Hu$^1$ \hspace{4em} Xuancheng Ren$^1$ \hspace{4em} Zhihui Lu$^1$\thanks{Corresponding author.} \hspace{5em} Pan Zhou$^{2}$ \\
\hspace{5em} $^1$ Fudan University \hspace{5em} $^2$ Singapore Management University \\
\texttt{sjhu24@m.fudan.edu.cn} \hspace{1em} \texttt{xcren25@m.fudan.edu.cn} \hspace{1em} 
\texttt{lzh@fudan.edu.cn} \\ 
\hspace{14em} \texttt{panzhou@smu.edu.sg}
}

\begin{document}

\maketitle

\begin{abstract} 
	Tree-based speculative decoding organizes draft tokens into a bounded tree so that multiple continuations can be verified in one target-model pass. However, the tree is constructed from draft-side scores while its usefulness is ultimately determined by the target model, creating a fundamental draft--target mismatch under a finite tree budget. We study two questions: once a draft tree is fixed, can a better exact verifier accept more draft tokens, and if not, how can target feedback improve the tree itself? We answer both through a target-flow view of the fixed draft tree. We identify a canonical \emph{exit law} that describes where target continuations leave the tree, and prove that one plus the resulting target coverage is a sharp upper bound on the expected output-block length, including the bonus token, of any exact path verifier. Moreover, every verifier attaining this ceiling must realize the same exit and bonus-token law; representative predraw-and-follow and sequential residual verifiers already attain it. This theory directly yields \emph{Tree Exit Verification} (TEV), an exact verifier that realizes the canonical law through one exit-node decision and one bonus-token decision, exposing a regular, level-parallel verification procedure. The same exit law also localizes where target probability is missing from the current tree. We use it as node-level feedback to train the drafter on its inference-time draft trees, reallocating finite tree budget toward target-relevant regions. Experiments across dialogue, code, and mathematical reasoning validate the fixed-tree equivalence, show that the \emph{Exit-Guided Draft-Tree Training} (ExitTrain) improves average output-block length by 13\%, and that TEV reduces verifier-stage latency by 15\%, yielding 14\% end-to-end speedup over DDTree. Together, our results separate the two remaining opportunities in bounded tree speculative decoding: \emph{better draft trees for higher acceptance, and more direct verification for lower latency}. Code is available at \url{https://github.com/hsj576/TEV}. 
\end{abstract}

\vspace{-4mm}
\section{Introduction}
\label{sec:introduction}
\vspace{-2mm}
 Large language models (LLMs) decode autoregressively, requiring an expensive target-model forward pass per token. Speculative decoding reduces this cost by using a lightweight draft model to propose future tokens that the target verifies in parallel~\citep{leviathan2023fast,chen2023accelerating}. 
While early methods draft a single sequence, recent systems organize many draft tokens into a  tree, allowing multiple possible continuations to share one target-model verification pass~\citep{miao2024specinfer,chen2024sequoia,li2024eagle2,li2025eagle}.
Tree speculative decoding can therefore be viewed as three stages:
the \emph{drafter} proposes draft tokens and their scores;
the \emph{tree builder} selects and organizes a limited number of them into a draft tree;
and the \emph{verifier} uses the target model to determine how far generation can safely follow a path in that tree while preserving the target distribution.
Existing systems implement verifier differently.
DDTree's predraw-and-follow verifier samples a target token at each tree node and follows a matching drafted child~\citep{ringel2026ddtree}, whereas EAGLE-style sequential residual verifier tests drafted children sequentially while updating the remaining target probability~\citep{li2025eagle}.
DFlash with DDTree provides one concrete realization of this drafter--builder--verifier pipeline~\citep{chen2026dflash,ringel2026ddtree}.

Under a fixed tree budget, the draft tree determines which future prefixes can be verified in a single target forward pass.
Among exponentially many possible prefixes, only a small number can appear in the tree. 
Since the tree must be built \emph{before} the target model evaluates it, practical builders rely on draft-model scores (e.g., confidence) to decide which branches to keep~\citep{li2024eagle2,li2025eagle,ringel2026ddtree}.
Yet the usefulness of those branches is ultimately determined by the target model.
When the draft and target prefer different continuations, limited tree budget may be spent on branches that look promising to the drafter but are rarely followed by the target, while target-likely branches remain insufficiently expanded in the tree.
Fig.~\ref{fig:teaser}(a) illustrates this \emph{draft--target tree-allocation gap}.
Although we use DDTree as our main instantiation, the same mismatch arises whenever draft-side information is used to construct a bounded draft tree before target verification.

This mismatch raises two fundamental questions. \emph{First}, once a draft tree is fixed, how much freedom remains in exact verification? Recent work like HSD~\citep{zhou2026hsd} shows that improved verification can increase acceptance for a sampled draft chain. So can another exact verifier similarly accept more tokens from the \emph{same tree}, or have existing exact verifiers already reached the tree's maximum achievable acceptance, leaving only execution efficiency to optimize? \emph{Second}, if existing verifiers already saturate the fixed-tree ceiling, how can target-model feedback improve future draft trees? Can we identify where target probability leaves the tree and use this signal to allocate the limited tree budget more effectively? These questions reveal two opportunities in tree speculative decoding: \emph{how much target probability the tree covers, and how efficiently that support is verified}.

\textbf{Our approach and contributions.}
We answer the both questions through a unified target-flow view of the realized draft tree.
First, we develop a verifier-independent theory showing that the maximum expected output-block length on a fixed tree is one plus the target probability mass covered by the selected support.
This theory exposes a canonical target-exit law, from which we derive \emph{Tree Exit Verification} (TEV) as a direct and implementation-friendly realization of optimal fixed-tree verification.
Finally, we use the same exit law as node-resolved training feedback to improve target-mass coverage under a finite tree budget, thereby increasing draft acceptance.
Fig.~\ref{fig:teaser} summarizes these three contributions.

\begin{figure}[t]\label{fig:teaser}
	\centering
	\includegraphics[width=\linewidth]{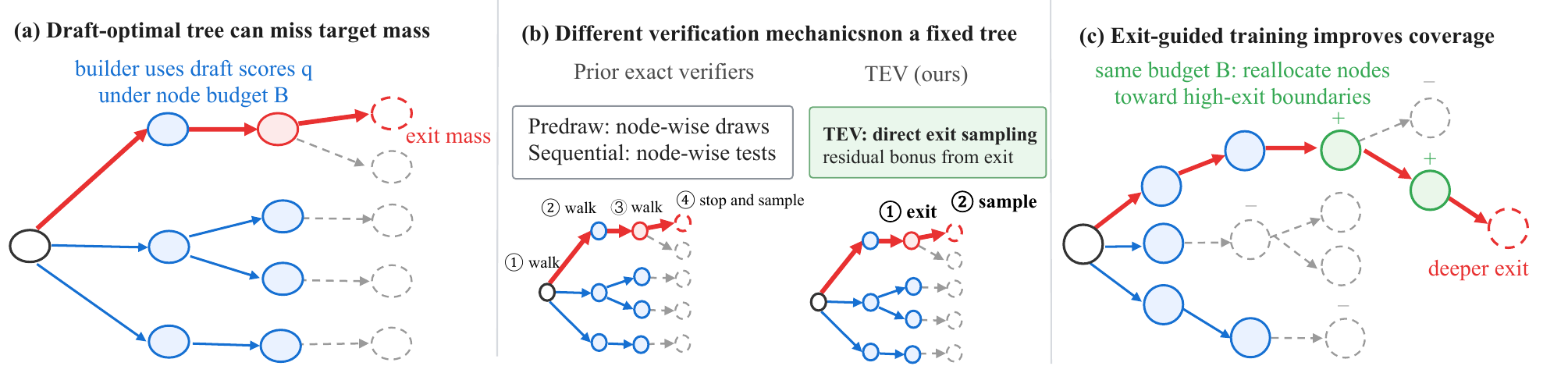}
	\vspace{-1.5mm}
	\vspace{-2.5mm}
	\caption{\textbf{Target-flow view of tree speculative decoding.} Blue denotes selected draft prefixes, gray dashed branches omitted candidates, red target-probability flow, and green reallocated tree budget. \textbf{(a)} Under a finite budget, draft-score-based tree construction can miss target-important branches, causing target mass to exit the tree. \textbf{(b)} On a fixed tree, prior exact verifiers realize the same optimal acceptance law through node-wise draws or tests, whereas TEV realizes it by direct exit-node sampling followed by bonus-token sampling. \textbf{(c)} Exit-guided training reallocates budget toward high-exit boundaries, improving target-mass coverage and output-block length.}
		\vspace{-4.5mm}
\end{figure}


First, we characterize the acceptance limit of exact path verifiers on a fixed tree: the maximum expected output-block length equals one plus its target-mass coverage, and every verifier attaining this bound induces the same canonical exit-node and bonus-token law. DDTree's predraw-and-follow~\citep{ringel2026ddtree} and EAGLE-style exact sequential residual verification~\citep{li2025eagle} realize this law and thus have identical accepted-path distributions on the same tree. Once verification is saturated, higher acceptance requires better tree coverage, not a different verifier.


Second, we derive \emph{Tree Exit Verification} (TEV), an exact, direct realization of this optimal law. Rather than using auxiliary per-node samples or sequential residual decisions, TEV samples an exit node directly and then a bonus token from its target residual (Fig.~\ref{fig:teaser}(b)). It preserves the fixed-tree acceptance ceiling while enabling sibling-order-invariant, level-parallel, fixed-shape execution: \emph{statistical saturation does not imply computational saturation}.


Third, we introduce \emph{Exit-guided Draft-Tree Training} (ExitTrain) to improve finite-budget tree coverage. ExitTrain evaluates the draft tree with the frozen target under actual tree-parent contexts and uses exit mass to reweight draft likelihoods, guiding budget allocation toward high-impact exit boundaries (Fig.~\ref{fig:teaser}(c)). Unlike token- or feature-level alignment~\citep{zhang2024learning,hu2025griffin,li2025eagle}, position-wise distillation~\citep{lei2026draftopd}, or scalar tree-reward optimization~\citep{hu2026gto}, this supervision localizes uncovered target mass directly on the draft tree.


Finally, we evaluate ExitTrain + TEV across dialogue, code, and mathematical reasoning benchmarks and multiple target-model families, against DFlash~\citep{chen2026dflash}, DDTree~\citep{ringel2026ddtree}, and Draft-OPD~\citep{lei2026draftopd}. Relative to DDTree with the same tree builder and budget, our system improves average output-block length (including the bonus token) by up to \textbf{13\%} and decode throughput by up to \textbf{14\%}. Separately, replacing predraw verification with TEV reduces verifier-stage latency by \textbf{15\%} without changing saturated expected output-block length on a fixed tree. These results support two complementary gains: \emph{better trees for higher acceptance and direct verification for lower latency}.

\vspace{-2mm}
\section{Background and Fixed-Tree Setup}
\label{sec:background}
\vspace{-2mm}

\paragraph{Tree speculative decoding.} 
A \emph{drafter} proposes candidates, a budget-$B$ \emph{builder} selects a prefix-closed tree, and one target pass lets a \emph{verifier} return an accepted path plus a bonus token. Systems combine multi-head, feature-autoregressive, recurrent, or block-diffusion drafting with parallel tree verification~\citep{cai2024medusa,li2024eagle,cheng2025redrafter,chen2026dflash,miao2024specinfer,chen2024sequoia,li2024eagle2,li2025eagle,svirschevski2024specexec}. Builders optimize this support using draft-side expected acceptance (OPT-Tree), block-diffusion marginals (DDTree), or autoregressive correction (DARTree)~\citep{wang2025opttree,ringel2026ddtree,li2026dartree}. We instantiate DFlash with DDTree's cumulative-score best-first builder~\citep{chen2026dflash,ringel2026ddtree}. These construction methods improve which tree is built, whereas our theory asks what any non-anticipating exact verifier can achieve once a prefix tree has been realized.

\begin{wrapfigure}{r}{0.4\textwidth}
	\vspace{-6mm}
	\centering
	\includegraphics[width=\linewidth]{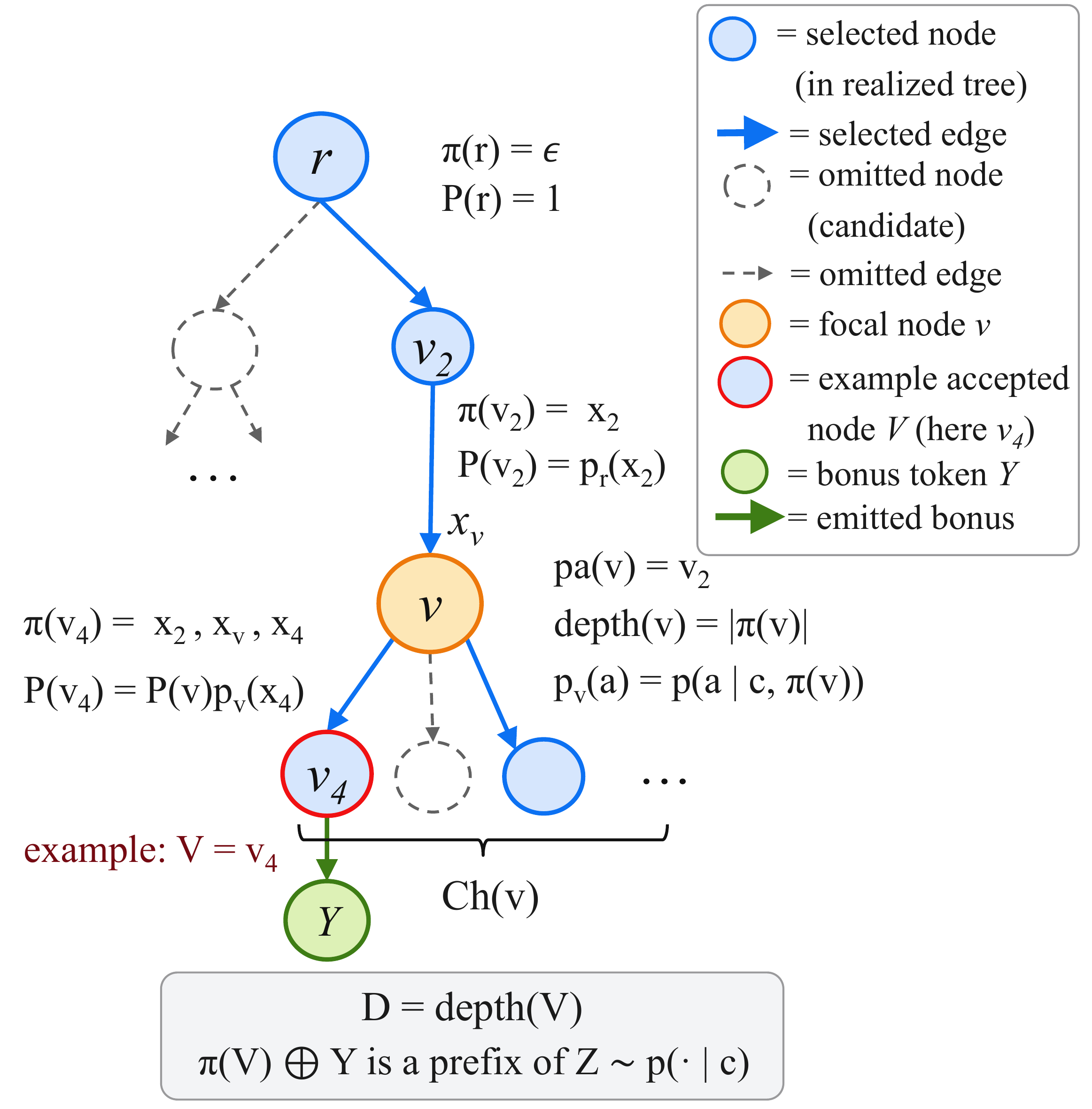}
	\vspace{-3mm}
	\caption{\textbf{Visualization of a fixed-tree.}}
	\label{fig:fixed_tree_setup}
	\vspace{-4mm}
\end{wrapfigure}

\paragraph{Two representative exact verifiers.}
DDTree's \emph{predraw-and-follow} independently samples one target token at each node~\citep{ringel2026ddtree}. Starting at the root, it follows a matching drafted child; otherwise, it stops and returns the sample as the bonus token. EAGLE-style \emph{exact sequential residual verification} instead tests children in order~\citep{li2025eagle}. After each rejection, it removes that child's target mass and renormalizes the residual, following the first accepted child or sampling a bonus token if all are rejected. Sec.~\ref{sec:coverage} shows that despite these different computations, both attain the same optimal law on a fixed tree. HSD~\citep{zhou2026hsd} instead verifies a sampled draft chain.

\paragraph{Related draft-model training.}
Prior work aligns drafters through on-policy or online distillation~\citep{zhou2024distillspec,liu2024online}, feature-, token-, or multi-step objectives~\citep{zhang2024learning,hu2025griffin,li2025eagle}, tree rewards~\citep{hu2026gto}, replayed trajectories~\citep{lei2026draftopd}, or acceptable-path modeling~\citep{zou2026variational}. Closest to ours, GTO optimizes a scalar tree reward~\citep{hu2026gto}, Learning to Draft learns throughput-driven depth and verification-size policies~\citep{zhang2026learning}, and VAT reweights simulated first-rejection positions~\citep{gu2026vat}. ExitTrain instead derives normalized node-level supervision from the canonical exit law of each realized finite tree, directly reallocating a fixed tree budget toward uncovered target mass.

\vspace{-2mm}
\section{The Canonical Exit Law and the Fixed-Tree Ceiling}
\label{sec:coverage}
\vspace{-2mm}

We condition on a decoding context and a realized draft tree. By following target probability through this fixed tree, we characterize both the largest expected output-block length any exact verifier can achieve and the unique law attained at that ceiling.

\paragraph{Fixed-tree setup.}
\label{sec:setup}
Let $\mathcal{X}$ be the token vocabulary and let $p(\cdot\mid c)$ denote the target continuation law given context $c$. A drafter and budget-$B$ builder produce a finite, rooted, prefix-closed tree $\mathcal{T}=(\mathcal{V},\mathcal{E},r)$ (Fig.~\ref{fig:fixed_tree_setup}), where $\mathcal{V}$ is the node set, $\mathcal{E}$ is the directed edge set, and $r$ is the root. Set $\pi(r)=\epsilon$, the empty sequence. Let $\mathcal{X}^{*}$ denote the set of finite token sequences. Each node $v$ represents a speculative continuation prefix $\pi(v)\in\mathcal{X}^{*}$; for $v\neq r$, let $\pa(v)$, $x_v$, $\depth(v)=|\pi(v)|$, and $\Ch(v)$ denote its parent, incoming token, depth, and children, so that $\pi(v)=\pi(\pa(v))\mathbin{\Vert}x_v$. Here $\Vert$ denotes sequence--token concatenation, $\alpha\preceq\beta$ means that sequence $\alpha$ is a prefix of $\beta$, and $\1[\cdot]$ denotes an indicator. Sibling tokens are distinct. Set $P(r)=1$; one target-model tree forward pass gives
\begin{align}
	p_v(a)&:=p(a\mid c,\pi(v)), \quad a\in\mathcal{X},
	\label{eq:node_target_distribution}\\
	P(v)&:=P(\pa(v))p_{\pa(v)}(x_v)=p(\pi(v)\mid c),
	\label{eq:prefix_probability}
\end{align}
where $P(v)$ is the target prefix probability. A verifier returns a path ending at $V$ and a bonus token $Y$. Let $D:=\depth(V)$ be the number of accepted draft tokens, and let $O:=\pi(V)\mathbin{\Vert}Y$ and $L:=|O|=D+1$ be the output block and its length, including the bonus token. Let $U$ denote auxiliary control randomness used, for example, by randomized early-stopping decisions and available before target tokens are revealed; randomness represented by the coupled target continuation $Z$ is not included in $U$. We call the verifier an \emph{exact path verifier} if there exists a coupling with a target continuation $Z=(Z_1,Z_2,\ldots)$ such that
\begin{equation}
    Z\mid(c,\mathcal{T},U)\sim p(\cdot\mid c),\qquad
    O=Z_{1:L}\ \text{a.s.},
    \label{eq:exact_path_coupling}
\end{equation}
and $L$ is a stopping time with respect to $\mathcal{H}_t:=\sigma(\mathcal{T},U,Z_1,\ldots,Z_t)$. Thus, the realized tree and independent control randomness are available at time zero, but neither may encode future target tokens. The target-prefix identity in Eq.~\eqref{eq:exact_path_coupling} is sufficient for the fixed-tree ceiling and equality characterization below; the stopping-time condition ensures that variable-length blocks compose safely across decoding rounds. Appendix~\ref{app:random_tree_conditioning} formalizes random construction and round-wise composition.

\vspace{-2mm}
\subsection{Target flow and the canonical exit law}
\label{sec:tev}
\label{sec:exit_law}
\vspace{-2mm}

We first describe how the target model interacts with a \emph{fixed} draft tree, without referring to any particular verifier. Consider a continuation $Z\sim p(\cdot\mid c)$ from the target model. Starting from the root, $Z$ follows matching drafted children for as long as its next token is represented in the tree. When the required next token is absent---or when a leaf is reached---the continuation leaves the draft tree. We call its deepest matched node the \emph{exit node}, denoted by $V^\star(Z)$ and illustrated in Fig.~\ref{fig:fixed_tree_setup}. Intuitively, the exit node marks \emph{where the draft tree stops covering a target continuation}.

For a node $v$, let
\begin{align}
	\mathcal{A}(v):=\{x_u:u\in\Ch(v)\}, \quad 
	\rho(v):=1-\sum\nolimits_{u\in\Ch(v)}p_v(x_u),\quad 
	w(v):=P(v)\rho(v).
	\label{eq:exit_weight}
\end{align}
Here, $\mathcal{A}(v)$ contains the draft tokens available after $v$. $P(v)$ is the probability that the target continuation reaches $v$. Conditioned on reaching $v$, the \emph{local exit probability} $\rho(v)$ is the fraction of target next-token probability not covered by its drafted children. So $w(v)$ is the overall probability that the target reaches $v$ and exits the draft tree there. For a leaf, no drafted child is available, so $\rho(v)=1$.

\begin{lemma}[Canonical exit law]
	\label{prop:exit_partition}
	The target-induced exit node satisfies
	\begin{equation}
		\Pr(V^\star(Z)=v\mid c,\mathcal{T})=w(v),
		\qquad
		\sum\nolimits_{v\in\mathcal{V}}w(v)=1.
		\label{eq:exit_partition}
	\end{equation}
\end{lemma}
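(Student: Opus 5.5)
We prove both parts of Lemma~\ref{prop:exit_partition} by decomposing the exit event into a ``reach'' event and a ``leave'' event. The key is that $Z$ is drawn from the fixed target law $p(\cdot\mid c)$, independently of how $\mathcal{T}$ was built once we condition on $(c,\mathcal{T})$. Since $\mathcal{T}$ is finite and prefix-closed, $V^\star(Z)$ is well defined. There is a unique maximal $k$ such that $Z_{1:k}=\pi(u)$ for some node $u$, and uniqueness follows because sibling tokens are distinct, so each prefix corresponds to at most one node.

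\textbf{Step 1 (reaching a node).} First we show $\Pr(\pi(v)\preceq Z\mid c,\mathcal{T})=P(v)$ for every $v$. We argue by induction on depth along the root-to-$v$ path. The base case is $P(r)=1$. For the inductive step, note that $\{\pi(v)\preceq Z\}=\{\pi(\pa(v))\preceq Z,\ Z_{\depth(v)}=x_v\}$. By the chain rule of the target law,
\[
\Pr\bigl(\pi(v)\preceq Z\bigr)=\Pr\bigl(\pi(\pa(v))\preceq Z\bigr)\,p_{\pa(v)}(x_v),
\]
which is exactly the recursion in Eq.~\eqref{eq:prefix_probability}.

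\textbf{Step 2 (exiting at a node).} The event $\{V^\star(Z)=v\}$ equals $\{\pi(v)\preceq Z\}\cap\{Z_{\depth(v)+1}\notin\mathcal{A}(v)\}$. For a leaf, $\mathcal{A}(v)=\emptyset$. Conditional on $\pi(v)\preceq Z$, the next token $Z_{\depth(v)+1}$ has law $p_v$. The children's tokens are distinct, so
\[
\Pr\bigl(Z_{\depth(v)+1}\notin\mathcal{A}(v)\bigm|\pi(v)\preceq Z\bigr)=1-\sum_{u\in\Ch(v)}p_v(x_u)=\rho(v).
\]
Multiplying by Step~1 gives $\Pr(V^\star(Z)=v\mid c,\mathcal{T})=P(v)\rho(v)=w(v)$.

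\textbf{Step 3 (normalization).} Because the exit node always exists and is unique, the events $\{V^\star(Z)=v\}$ for $v\in\mathcal{V}$ partition the sample space, so the weights sum to $1$. As a sanity check independent of the probabilistic argument, the sum also telescopes. Using $P(v)\sum_{u\in\Ch(v)}p_v(x_u)=\sum_{u\in\Ch(v)}P(u)$, we get
\[
\sum_{v\in\mathcal{V}}w(v)=\sum_{v}P(v)-\sum_{v}\sum_{u\in\Ch(v)}P(u)=\sum_{v}P(v)-\sum_{v\neq r}P(v)=P(r)=1.
\]

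The main subtlety is conditioning. Because $\mathcal{T}$ may be random, we need that, given $(c,\mathcal{T})$, the continuation $Z$ still has law $p(\cdot\mid c)$, meaning the tree is not allowed to depend on $Z$. This is what the setup's requirement $Z\mid(c,\mathcal{T},U)\sim p(\cdot\mid c)$ guarantees, and I would cite it explicitly at that point. A second point to handle is the case where $Z$ terminates, for example at an end-of-sequence token. In that case the exit occurs at the node matching the terminal prefix, which is consistent with treating absent tokens as uncovered mass.
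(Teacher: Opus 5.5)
Your proposal is correct and follows essentially the same route as the paper: you partition the sample space by the deepest matched node, compute the reach probability $P(v)$, multiply by the local exit probability $\rho(v)$, and get normalization from the partition. Your telescoping check is the paper's local conservation identity $w(v)+\sum_{u\in\Ch(v)}P(u)=P(v)$ summed over all nodes, which the paper states right after its proof.
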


The first identity in Lemma~\ref{prop:exit_partition} says that $w(v)$ is exactly the probability that a target continuation remains inside the draft tree up to $v$ and leaves there. The second follows because every target continuation has exactly one exit node in a finite tree. Thus, $\{w(v)\}_{v\in\mathcal{V}}$ is a normalized distribution over where the target leaves the draft tree, which we call the \emph{canonical exit law}. Importantly, this law is determined only by the target model and the fixed draft tree---not by a verifier. See the complete proof and probability-conservation argument in Appendix~\ref{app:exit_partition_proof}.

\subsection{From the exit law to the fixed-tree acceptance ceiling}
\label{sec:saturation}
\label{sec:canonical_saturation}

Let $D_{\mathsf{A}}$ and $L_{\mathsf{A}}=D_{\mathsf{A}}+1$ denote, respectively, the accepted draft-token count and output-block length of an exact path verifier $\mathsf{A}$. Accepting node $u$ requires the target continuation to contain prefix $\pi(u)$, so its acceptance probability is at most $P(u)$. Summing these node-wise limits motivates
\begin{equation}
	C_p(\mathcal{T})
	:=\sum\nolimits_{u\in\mathcal{V}\setminus\{r\}}P(u)
	=\sum\nolimits_{v\in\mathcal{V}}w(v)\depth(v),
	\label{eq:eal_two_decompositions}
\end{equation}
which we call the \emph{target coverage} of the draft tree.

The first form sums the target probability of every represented non-root prefix. The second, which follows from the exit law, identifies the same quantity as the expected exit depth, i.e., the expected number of accepted draft tokens. Because every round also emits one bonus token, the corresponding expected output-block length at saturation is $1+C_p(\mathcal{T})$.

For $\rho(v)>0$, define the residual target distribution
\begin{equation}
	p_v^{\mathrm{res}}(a)
	:={p_v(a)\1[a\notin\mathcal{A}(v)]}\ / \ {\rho(v)}.
	\label{eq:residual_distribution}
\end{equation}
which removes drafted-child tokens and renormalizes; at a leaf, $p_v^{\mathrm{res}}=p_v$. We call a verifier \emph{saturated} when $\mathbb{E}[D\mid c,\mathcal{T}]=C_p(\mathcal{T})$, equivalently when $\mathbb{E}[L\mid c,\mathcal{T}]=1+C_p(\mathcal{T})$. The following theorem gives this sharp ceiling and the unique output law at equality; Appendix~\ref{app:ceiling_uniqueness_proof} provides the proof.

\begin{theorem}[Fixed-tree ceiling and canonical saturation]
	\label{thm:retention_bound}
	\label{thm:unique_saturated_law}
	For any exact path verifier $\mathsf{A}$ and any non-root node $u$,
	\begin{equation}
		\Pr\nolimits_{\mathsf{A}}
		(u\text{ is accepted}\mid c,\mathcal{T})
		\leq P(u).
		\label{eq:retention_bound}
	\end{equation}
	Consequently,
	\begin{equation}
		\mathbb{E}[D_{\mathsf{A}}\mid c,\mathcal{T}]
		\leq C_p(\mathcal{T}),
		\qquad
		\mathbb{E}[L_{\mathsf{A}}\mid c,\mathcal{T}]
		\leq 1+C_p(\mathcal{T}).
		\label{eq:eal_upper_bound}
	\end{equation}
	If either length bound is tight, every node-wise bound is tight and, for every $v$ with $w(v)>0$,
	\begin{equation}
		\Pr\nolimits_{\mathsf{A}}(V=v\mid c,\mathcal{T})=w(v),
		\qquad
		Y\mid(V=v,c,\mathcal{T})\sim p_v^{\mathrm{res}}.
		\label{eq:canonical_output_law}
	\end{equation}
	Hence, all saturated exact path verifiers induce the same accepted-path and bonus-token law.
\end{theorem}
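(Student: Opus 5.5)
The proof works entirely through the coupling in Eq.~\eqref{eq:exact_path_coupling}. Fix $c,\mathcal{T}$ and work conditionally on $U$; unconditional statements follow by averaging over $U$.

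\emph{Node-wise bound.} The verifier accepts a non-root node $u$ exactly when $V\succeq u$ in the tree, that is, when $\pi(u)\preceq\pi(V)$. Since $O=Z_{1:L}$ almost surely and $\pi(V)$ is a prefix of $O$, acceptance of $u$ implies $Z_{1:\depth(u)}=\pi(u)$. Because $Z\mid(c,\mathcal{T},U)\sim p(\cdot\mid c)$, this event has probability $p(\pi(u)\mid c)=P(u)$ by Eq.~\eqref{eq:prefix_probability}, which proves Eq.~\eqref{eq:retention_bound}. For the length bounds, write $D=\sum_{u\neq r}\1[u\text{ accepted}]$ (the nodes on the path to $V$), take expectations, and apply the first form of $C_p$ in Eq.~\eqref{eq:eal_two_decompositions}. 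The bound on $L$ is then immediate because $L=D+1$.

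\emph{Equality case.} If either length bound is tight, then the sum of nonnegative slacks $P(u)-\Pr(u\text{ accepted})$ is zero, so every node-wise bound is tight. Since $\{u\text{ accepted}\}\subseteq\{Z_{1:\depth(u)}=\pi(u)\}$ and both events have the same probability, they coincide almost surely. Thus, a.s., $V$ is exactly the deepest node $v$ with $\pi(v)\preceq Z$, which is $V^\star(Z)$. Lemma~\ref{prop:exit_partition} then gives $\Pr(V=v)=w(v)$.

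\emph{Bonus-token law.} This is the step needing the most care. On $\{V=v\}=\{V^\star(Z)=v\}$, the bonus token is $Y=Z_{\depth(v)+1}$. By the definition of the exit node, either $v$ is a leaf or $Z_{\depth(v)+1}\notin\mathcal{A}(v)$; at a leaf, $\mathcal{A}(v)=\emptyset$, so the condition is vacuous. Hence the joint event $\{V=v,\ Y=a\}$ equals $\{Z_{1:\depth(v)}=\pi(v),\ Z_{\depth(v)+1}=a\}$ intersected with $\{a\notin\mathcal{A}(v)\}$, and its probability is $P(v)\,p_v(a)\,\1[a\notin\mathcal{A}(v)]$. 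Dividing by $w(v)=P(v)\rho(v)>0$ yields exactly $p_v^{\mathrm{res}}(a)$ from Eq.~\eqref{eq:residual_distribution}.

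The main obstacle is making the almost-sure identification $V=V^\star(Z)$ rigorous. The key point is that set inclusion plus equal probability gives almost-sure equality node by node, and there are only finitely many nodes. It also matters that conditioning on $U$ does not alter the law of $Z$, which the definition of an exact path verifier guarantees. With $V=V^\star(Z)$ in hand, both the accepted-path law and the bonus-token law depend only on $(p,\mathcal{T})$, so all saturated exact path verifiers induce the same law.
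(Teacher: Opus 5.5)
Your proof is correct and follows the paper's argument essentially step for step. It uses the inclusion $\{u\text{ accepted}\}\subseteq\{\pi(u)\preceq Z\}$, sums the nonnegative node-wise slacks, gets almost-sure equality of these events at saturation, and identifies $V=V^\star(Z)$. The only cosmetic differences are that the paper gets $\Pr(V=v)=w(v)$ by differencing $\Pr(v\text{ accepted})-\sum_{u\in\Ch(v)}\Pr(u\text{ accepted})$ rather than citing Lemma~\ref{prop:exit_partition}, and you compute the joint probability of $(V,Y)$ more explicitly than the paper does.
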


The theorem has two implications. First, $C_p(\mathcal{T})$ is the maximum expected number of accepted draft tokens, and $1+C_p(\mathcal{T})$ is the maximum expected output-block length reported by our experimental metric: no exact verifier can recover a drafted token absent from the tree. Second, saturation makes every node-wise bound tight, forcing the exit law $w$ and residual bonus-token law $p_v^{\mathrm{res}}$. Saturated verifiers may therefore differ computationally but not in their accepted-path distribution. The representative verifiers in Sec.~\ref{sec:background} attain this ceiling (Appendix~\ref{app:verifier_equivalence_proof}).

\begin{corollary}[Representative verifiers are saturated]
	\label{thm:verifier_equivalence}
	Predraw-and-follow in DDTree~\citep{ringel2026ddtree} and exact sequential residual verification in EAGLE-style tree decoding~\citep{li2025eagle} both attain the fixed-tree ceiling.
	Therefore, on the same draft tree and target distributions, they share the canonical exit law, accepted-path distribution, expected accepted draft-token count, and expected output-block length.
\end{corollary}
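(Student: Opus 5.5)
Both verifiers will be shown to attain the fixed-tree ceiling by computing their accepted draft-token count in closed form. Theorem~\ref{thm:retention_bound} then supplies the rest. Once $\mathbb{E}[D\mid c,\mathcal{T}]=C_p(\mathcal{T})$ holds for each verifier, the equality clause of the theorem forces both to output the canonical law \eqref{eq:canonical_output_law}: the exit law $w$ and the residual bonus law $p_v^{\mathrm{res}}$. Identical accepted-path distributions, expected accepted counts, and expected output-block lengths follow. It therefore suffices to prove that, for each non-root node $u$, the probability that $u$ is accepted equals $P(u)$. Summing over $u$ and using \eqref{eq:eal_two_decompositions} then gives saturation.

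\textbf{Predraw-and-follow.} At each node $v$ an independent $Y_v\sim p_v$ is drawn. The verifier advances from $v$ to the child $u$ with $x_u=Y_v$ if one exists, and otherwise stops and emits $Y_v$. We argue by induction on depth. Node $u$ is accepted iff $\pa(u)$ is accepted and $Y_{\pa(u)}=x_u$. Since $Y_{\pa(u)}$ is independent of the draws at ancestors, which alone decide whether $\pa(u)$ is reached, we get
\[
\Pr(u\text{ accepted})=\Pr(\pa(u)\text{ accepted})\,p_{\pa(u)}(x_u)=P(\pa(u))\,p_{\pa(u)}(x_u)=P(u).
\]
The last equality is \eqref{eq:prefix_probability}. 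One must also check that this is an exact path verifier. Define $Z$ by reading $Y_{v_0},Y_{v_1},\dots$ along the followed path, and continue past the exit with fresh target samples. Each $Z_{t+1}$ given $Z_{1:t}$ is then distributed as $p(\cdot\mid c,Z_{1:t})$, and $L$ is a stopping time.

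\textbf{Sequential residual verification.} Here the key local claim is one-step: at node $v$, with residual mass $q_0=p_v$ and children tested in a fixed order, child $u$ is accepted with probability exactly $p_v(x_u)$. If all children are rejected, the bonus token is distributed as $p_v^{\mathrm{res}}$. We would prove this with the standard speculative-sampling residual argument. Treat each child test as proposing the deterministic draft token $x_u$ with acceptance probability $\min(1,q(x_u)/\delta_{x_u}(x_u))=q(x_u)$ under the current residual $q$. After rejection, the residual becomes $q$ with $x_u$ removed and renormalized. Inductively over the siblings, the unconditional probability of accepting the $k$-th child telescopes to $p_v(x_{u_k})$. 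Equivalently, the sequential procedure samples one token from $p_v$ by successive conditioning, so the outcome law at each node matches predraw-and-follow's. Exactness follows by letting $Z_{\depth(v)+1}$ be the accepted child token or the bonus token. The depth induction is then identical to the predraw case, giving $\Pr(u\text{ accepted})=P(u)$.

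The main obstacle is the sequential-residual telescoping: one must track the renormalized residual correctly through several rejections and confirm that the final bonus law is exactly $p_v^{\mathrm{res}}$. That computation is what makes the local outcome distribution equal to a single exact draw from $p_v$.
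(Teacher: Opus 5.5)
Your proposal is correct and follows essentially the same route as the paper. For predraw-and-follow, reach probabilities are products of $p_{v_{i-1}}(x_{v_i})$ along the path; for sequential residual verification, the telescoping residual computation shows each child is followed with probability $p_v(x_u)$, and otherwise the verifier exits with probability $\rho(v)$ and bonus law $p_v^{\mathrm{res}}$. The only difference is organizational: you prove node-wise retention $P(u)$ and exactness, then invoke the equality clause of Theorem~\ref{thm:unique_saturated_law} for the shared law, whereas the paper computes each verifier's full exit-and-bonus law directly and reads off exactness and saturation from its agreement with the target-induced pair $(V^\star(Z),Y^\star(Z))$.
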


Whenever node $v$ is reached, both representative verifiers follow each drafted child $u$ with probability $p_v(x_u)$ and exit with the remaining probability $\rho(v)$. They therefore realize the same canonical law despite different internal computations.

\subsection{What remains after fixed-tree saturation}
\label{sec:fixed_tree_implications}
The fixed-tree result leaves two opportunities. The same law can be executed more efficiently, motivating TEV in Sec.~\ref{sec:direct_sampler}: \emph{statistical saturation does not imply computational saturation}. Higher output-block length, however, requires increasing $C_p(\mathcal{T})$ itself. A large $w(v)$ localizes target mass that reaches $v$ but leaves the current tree, motivating the training method in Sec.~\ref{sec:draft_training}. This result complements HSD~\citep{zhou2026hsd}, which exploits target-to-draft ratios while verifying a random sampled chain rather than conditioning on a realized tree (Appendix~\ref{app:sampled_chain_relation}).

\vspace{-2mm}
\section{TEV: Direct Verification from the Canonical Exit Law}
\label{sec:direct_sampler}
\vspace{-2mm}

Sec.~\ref{sec:coverage} resolves the statistical question: saturated exact verifiers share the same canonical exit law and output-block length on a fixed tree. The remaining freedom is computational---\emph{how can this common law be realized efficiently?} Existing verifiers recover it through per-node draws or sequential child decisions. Tree Exit Verification (TEV) instead realizes it directly.

Theorem~\ref{thm:unique_saturated_law} specifies the required output: exit at $v$ with probability $w(v)$ and then draw the bonus token from $p_v^{\mathrm{res}}$. TEV therefore computes the exit distribution, draws one exit node, and samples one residual bonus token. Algorithm~\ref{alg:tev} in Appendix~\ref{app:tev} gives the complete procedure after the shared target-model tree forward pass produces $\{p_v\}$. The drafter need not be queried again: it has already determined what appears in the realized tree, while the target distributions determine how probability flows through that support.

Because Algorithm~\ref{alg:tev} implements the canonical law, its correctness follows from Sec.~\ref{sec:saturation}; Appendix~\ref{app:tev_exactness_proof} gives the proof.

\begin{corollary}[\TEV{} is exact and saturated]
	\label{prop:tev_retention}
	For any context and realized prefix-closed draft tree, \TEV{} is an exact path verifier and attains the fixed-tree ceiling.
	In particular, we have 
	\begin{equation}
		\Pr\nolimits_{\TEV}(u\text{ is accepted}\mid c,\mathcal{T})
		=P(u)  \quad (\forall u\in\mathcal{V}\setminus\{r\}),
		\qquad
		\mathbb{E}[L_{\TEV}\mid c,\mathcal{T}]
		=1+C_p(\mathcal{T}).
		\label{eq:tev_node_retention}
	\end{equation}
\end{corollary}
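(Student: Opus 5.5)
The plan is to exhibit an explicit coupling between TEV's output and a target continuation $Z\sim p(\cdot\mid c)$, and then read off the node-wise acceptance probabilities. TEV draws an exit node $V$ with $\Pr(V=v)=w(v)$; by Lemma~\ref{prop:exit_partition} this is a valid distribution. It then draws $Y\sim p_V^{\mathrm{res}}$. For nodes with $w(v)=0$ the residual is never needed. I will verify the first identity of Eq.~\eqref{eq:exact_path_coupling}: the block $O=\pi(V)\mathbin{\Vert}Y$ has the law of the prefix $Z_{1:D+1}$, where $D=\depth(V^\star(Z))$ and $V^\star$ is the target exit node.

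\textbf{Step 1 (law of $O$).} For a node $v$ and token $a$, compute $\Pr(V=v,Y=a)$. If $a\notin\mathcal{A}(v)$, this equals
\[
w(v)\,p_v(a)/\rho(v)=P(v)\,p_v(a)=p(\pi(v)\mathbin{\Vert}a\mid c).
\]
If $a\in\mathcal{A}(v)$, it is zero. On the target side, $V^\star(Z)=v$ and $Z_{d(v)+1}=a$ happen exactly when $Z$ has prefix $\pi(v)$ and $a\notin\mathcal{A}(v)$, which has the same probability. So $(V,Y)\overset{d}{=}(V^\star(Z),Z_{D+1})$.

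\textbf{Step 2 (coupling).} Construct $Z$ by setting $Z_{1:L}=O$ and drawing the tail from $p(\cdot\mid c,O)$. By Step 1 and the chain rule, $Z\sim p(\cdot\mid c)$, and $O=Z_{1:L}$ almost surely. TEV uses no control randomness that anticipates $Z$: all of its randomness is absorbed into $Z$. Hence $Z$ is independent of $(\mathcal{T},U)$ given $c$.

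\textbf{Step 3 (stopping time).} Under this coupling, $V=V^\star(Z)$, and $L=\depth(V^\star(Z))+1$. The event $\{L\le t\}$ is determined by $\mathcal{T}$ and $Z_{1:t}$, since exiting at depth $<t$ is visible from the first $t$ tokens. Therefore $L$ is an $\mathcal{H}_t$-stopping time, and TEV is an exact path verifier.

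\textbf{Step 4 (saturation).} A non-root node $u$ is accepted iff $\pi(u)\preceq\pi(V)$. By Step 2 this is the event that $Z$ has prefix $\pi(u)$, which has probability $P(u)$. Equivalently, it equals $\sum_{v\succeq u}w(v)=P(u)$ by probability conservation along the subtree. This gives the node-wise equality in Eq.~\eqref{eq:tev_node_retention}. Summing over $u\neq r$ and using Eq.~\eqref{eq:eal_two_decompositions} gives $\mathbb{E}[D]=C_p(\mathcal{T})$, so $\mathbb{E}[L]=1+C_p(\mathcal{T})$, which is the ceiling of Theorem~\ref{thm:retention_bound}.

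The main obstacle is the formal bookkeeping in Steps 2--3. The coupling has to meet both requirements of the exactness definition: $Z$ must be independent of $(\mathcal{T},U)$, and $L$ must be a stopping time. This holds because TEV's exit draw can be represented as a deterministic function of $Z$, namely $V=V^\star(Z)$. If an explicit construction is preferred, I would realize TEV's sampling by inverting the exit law on a target continuation rather than via external $U$.
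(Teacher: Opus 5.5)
Your proposal is correct and follows essentially the paper's own argument: establish $(V,Y)_{\TEV}\overset{d}{=}(V^\star(Z),Y^\star(Z))$, couple so that $O=Z_{1:L}$ with $L=\depth(V^\star(Z))+1$ a stopping time, then get $\Pr_{\TEV}(u\text{ is accepted}\mid c,\mathcal{T})=\sum_{v\succeq u}w(v)=P(u)$ and sum over $u\neq r$. The differences are cosmetic. You compute the joint law $\Pr(V=v,Y=a)=P(v)p_v(a)\1[a\notin\mathcal{A}(v)]$ in one line rather than citing Lemma~\ref{prop:exit_partition} plus the residual conditional law. You also build the coupling explicitly by appending a tail drawn from $p(\cdot\mid c,O)$, which tacitly uses that the blocks $\pi(v)\mathbin{\Vert}a$ with $a\notin\mathcal{A}(v)$ partition all continuations; the paper simply couples the two equal-in-law pairs almost surely.
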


By Lemma~\ref{prop:exit_partition}, a target continuation exits according to $w$ and produces its next token according to $p_v^{\mathrm{res}}$. TEV reproduces these distributions; node $u$ is accepted exactly when the sampled exit lies in its subtree, whose total exit mass is $P(u)$.

\textbf{Efficiency}. TEV shares the accepted-path law of predraw and sequential residual verification, but removes their per-node samples or data-dependent child tests. It uses node-local reductions, depth-wise propagation, and one exit-node and bonus-token draw, yielding a sibling-order-invariant, fixed-shape, level-parallel procedure. Sec.~\ref{sec:ablation} evaluates this latency benefit separately from output-block length.

\vspace{-2mm}
 \section{ExitTrain: Exit-Guided Draft-Tree Training}
 \label{sec:draft_training}
 \vspace{-2mm}
 
Sec.~\ref{sec:direct_sampler} addresses the remaining computational freedom: TEV changes how the canonical law is executed, not the fixed-tree acceptance ceiling. We now turn to the second question from Sec.~\ref{sec:introduction}: \emph{how can target feedback build a better draft tree?} Theorem~\ref{thm:unique_saturated_law} provides the key: once verification is saturated, higher output-block length requires greater target coverage, and the exit law reveals where that coverage is missing.

Let $H$ be the maximum draft depth. For an expandable node $v$ with $\depth(v)<H$ and a missing child token $a\notin\mathcal{A}(v)$, write $\mathcal{T}\oplus(v,a)$ for the tree obtained by adding child $(v,a)$. Then
\begin{align}
 C_p(\mathcal{T}\oplus(v,a))-C_p(\mathcal{T})&=P(v)p_v(a),
 \label{eq:one_step_marginal_gain}\\
 \sum\nolimits_{a\notin\mathcal{A}(v)}P(v)p_v(a)&=P(v)\rho(v)=w(v).
 \label{eq:exit_as_expansion_gain}
\end{align}
Hence, for an expandable node $v$, $w(v)$ measures the total one-step target coverage currently missing below $v$. A large $w(v)$ means that the target frequently reaches this prefix but substantial next-token probability is not represented by its drafted children. These are precisely the regions where allocating more tree capacity can keep target continuations inside the tree for longer. This expansion interpretation does not apply to a hard-horizon leaf. If $H$ is the maximum draft depth and $\depth(v)=H$, then $\rho(v)=1$ and $w(v)=P(v)$; weighting its incoming token rewards a path that already carries target mass to the full horizon rather than missing one-step coverage.

This observation directly suggests our training strategy. Let $q_\theta$ be the current drafter with parameters $\theta$. For each training context $c$, $q_\theta$ and the same tree builder used at inference construct a bounded tree $\widehat{\mathcal{T}}_\theta(c)$. We keep its structure fixed within the update, evaluate every node with the frozen target model under its actual tree-parent context, and compute the exit probabilities $w(v)$. Training therefore receives node-level feedback about where the \emph{current inference-time draft tree} fails to cover target probability, rather than supervision from only a single draft path.

For a minibatch $\mathcal{B}$ of contexts $\{c_i\}_{i\in\mathcal{B}}$, let $\widehat{\mathcal{T}}_i=(\widehat{\mathcal{V}}_i,\widehat{\mathcal{E}}_i,r)$ be the realized tree for example $i$, let $w_i(v)$ be its detached target-derived exit weight, and let $q_{\theta,d}(\cdot\mid c_i)$ be DFlash's position-wise marginal distribution at future draft position $d$. We reweight the likelihood of the incoming token at each non-root node:
 \begin{equation}
 	\mathcal{L}_{\mathrm{exit}}(\theta)
 	:=
 	-
 	\frac{
		\sum_{i\in\mathcal{B}}
		\sum_{v\in\widehat{\mathcal{V}}_i\setminus\{r\}}
		w_i(v)\log q_{\theta,\depth(v)}(x_v\mid c_i)}
 	{\sum_{i\in\mathcal{B}}
 		(|\widehat{\mathcal{V}}_i|-1)}.
 	\label{eq:tree_auxiliary_loss}
 \end{equation}
Consequently, nodes at the same depth share the same draft row even when they have different tree-parent prefixes; in contrast, $p_v$ is evaluated under the actual prefix $(c_i,\pi(v))$. All target-derived quantities and the realized tree structure are detached; gradients flow only through the drafter.
 
Why reinforce the position-wise marginal probability of the incoming draft token of an exit node? In DDTree, a prefix is ranked by cumulative draft scores along its path. Increasing the marginal probability assigned to $x_v$ at position $\depth(v)$ therefore raises the future priority of $v$ and its descendants, making the builder more likely to retain and further expand that region. Weighting this update by $w(v)$ directs future tree budget toward prefixes that the target frequently reaches but the current tree insufficiently expands. Because the top-$B$ builder is discrete, this is a \emph{tree-allocation surrogate}, not the exact gradient of target coverage; its role is to improve \emph{where} the bounded tree spends its nodes.
 
 Finally, let $\mathcal{L}_{\mathrm{tok}}$ denote the standard dense token-prediction loss and let $\lambda_{\mathrm{exit}}\geq0$ control the exit-guided term. We optimize
 \begin{equation}
 	\mathcal{L}(\theta)
 	=
 	\mathcal{L}_{\mathrm{tok}}(\theta)
 	+
 	\lambda_{\mathrm{exit}}\mathcal{L}_{\mathrm{exit}}(\theta).
 	\label{eq:hybrid_loss}
 \end{equation} The two terms are complementary: the dense token loss teaches the drafter \emph{what to predict}, while the exit-guided loss teaches a bounded tree \emph{where to allocate its capacity}. Our default loss includes hard-horizon leaves, so $w(v)$ has two roles: missing-coverage credit at expandable nodes and successful-deep-reach credit at depth $H$. The root requires separate treatment: $w(r)$ measures target mass missed by every first-position draft token, but $r$ has no incoming draft edge and is omitted from $\mathcal{L}_{\mathrm{exit}}$. Consequently, the included exit weights sum to $1-w(r)$ rather than one, and the root-exit mass receives no direct credit from $\mathcal{L}_{\mathrm{exit}}$; coverage of missing first-position tokens is addressed through the dense token objective. In this way, the same canonical exit law serves two distinct roles: TEV uses it to realize fixed-tree verification directly, while exit-guided training uses it to improve the tree itself. Sec.~\ref{sec:ablation} studies alternative weights, and Appendix~\ref{app:efficient_loss} gives an efficient implementation.

\vspace{-2mm}
\section{Experiments}
\label{sec:experiments}
\vspace{-2mm}

\textbf{Setup.} We evaluate Qwen3-4B/8B~\citep{yang2025qwen3}, LLaMA-3.1-8B~\citep{dubey2024llama}, Gemma-4-12B-IT~\citep{gemma2026modelcard}, and Qwen3-Coder-30B-A3B~\citep{yang2025qwen3} on MT-Bench~\citep{zheng2023judging}, HumanEval~\citep{chen2021evaluating}, GSM8K~\citep{cobbe2021training}, and MATH-500~\citep{hendrycks2021measuring}. Drafter training uses \texttt{open-perfectblend}~\citep{xu2024perfectblendredefiningrlhf} prompts and online responses rolled out by each corresponding target model (Appendix~\ref{app:experimental_details}).

\textbf{Methods.} DDTree uses the released DFlash drafter, cumulative-score best-first builder, and predraw verifier~\citep{chen2026dflash,ringel2026ddtree}; Draft-OPD replaces only the drafter with its position-wise distilled checkpoint~\citep{lei2026draftopd}. \textbf{ExitTrain + TEV} denotes the complete system combining our exit-guided drafter with TEV. Appendix~\ref{app:experimental_details} gives full details.

\textbf{Metrics.} Because all methods preserve the target distribution, we report speedup over autoregressive decoding ($SR$) and average output-block length per draft--verify cycle ($\tau=\mathbb{E}[L]$), including the bonus token, at $T\in\{0,1\}$.

\begin{table}[t]
    \caption{Speedup ratio ($SR$) and average output-block length including the bonus token ($\tau$) on MT-Bench, HumanEval, GSM8K, and MATH-500. All tree-based systems share the same builder and tree budget $B=64$. ExitTrain + TEV combines our exit-guided drafter with TEV.}
    \label{tab:main_results}
    \vskip 0.08in
    \vspace{-4mm}
    \begin{center}
    \begin{small}
    \resizebox{1.00\columnwidth}{!}{
    \setlength{\tabcolsep}{4pt}
    \begin{tabular}{ll||cc|cc|cc|cc||cc}
        \toprule
        & & \multicolumn{2}{c|}{\textbf{MT-Bench}} & \multicolumn{2}{c|}{\textbf{HumanEval}} & \multicolumn{2}{c|}{\textbf{GSM8K}} & \multicolumn{2}{c||}{\textbf{MATH-500}} & \multicolumn{2}{c}{\textbf{Average}} \\
        \cmidrule{3-12}
        \textbf{Model} & \textbf{Method} & $SR\uparrow$ & $\tau\uparrow$ & $SR\uparrow$ & $\tau\uparrow$ & $SR\uparrow$ & $\tau\uparrow$ & $SR\uparrow$ & $\tau\uparrow$ & $SR\uparrow$ & $\tau\uparrow$ \\
        \midrule
        \multicolumn{12}{c}{\textit{Temperature = 0}} \\
        \midrule
        \multirow{3}{*}{Qwen3-4B}
        & DDTree & 3.81 & 5.41 & 5.71 & 8.13 & 5.65 & 8.48 & 6.83 & 9.16 & 5.50 & 7.80 \\
        & Draft-OPD & 4.16 & 5.73 & 5.96 & 8.35 & 6.11 & 9.30 & 7.02 & 10.14 & 5.81 & 8.38 \\
        & \textbf{ExitTrain + TEV} & \textbf{4.28} & \textbf{6.17} & \textbf{6.09} & \textbf{8.74} & \textbf{6.91} & \textbf{10.22} & \textbf{7.23} & \textbf{10.26} & \textbf{6.13} & \textbf{8.85} \\
        \cmidrule{1-12}
        \multirow{3}{*}{Qwen3-8B}
        & DDTree & 3.75 & 5.54 & 5.83 & 8.01 & 5.88 & 8.49 & 6.23 & 9.76 & 5.43 & 7.95 \\
        & Draft-OPD & 3.90 & 5.92 & 6.02 & 8.46 & 6.27 & 9.33 & 6.84 & 10.17 & 5.76 & 8.47 \\
        & \textbf{ExitTrain + TEV} & \textbf{4.22} & \textbf{6.12} & \textbf{6.15} & \textbf{8.73} & \textbf{7.20} & \textbf{10.39} & \textbf{7.15} & \textbf{10.69} & \textbf{6.18} & \textbf{8.98} \\
        \cmidrule{1-12}
        \multirow{2}{*}{LLaMA-3.1-8B}
        & DDTree & 3.55 & 5.53 & 4.33 & 6.53 & 3.79 & 5.89 & 3.70 & 5.71 & 3.84 & 5.92 \\
        & \textbf{ExitTrain + TEV} & \textbf{3.69} & \textbf{5.82} & \textbf{4.72} & \textbf{7.24} & \textbf{4.75} & \textbf{7.48} & \textbf{4.70} & \textbf{7.34} & \textbf{4.46} & \textbf{6.97}  \\
        \cmidrule{1-12}
        \multirow{2}{*}{Gemma-4-12B-IT}
        & DDTree & 4.17 & 5.44 & 4.83 & 5.66 & 4.39 & 5.29 & 4.87 & 5.86 & 4.57 & 5.56 \\
        & \textbf{ExitTrain + TEV} & \textbf{4.35} & \textbf{5.57} & \textbf{5.09} & \textbf{5.92} & \textbf{4.70} & \textbf{5.66} & \textbf{5.17} & \textbf{6.33} & \textbf{4.83} & \textbf{5.87} \\
        \cmidrule{1-12}
        \multirow{2}{*}{Qwen3-Coder-30B-A3B}
        & DDTree & 3.70 & 4.87 & 7.57 & 9.95 & 5.26 & 6.81 & 5.61 & 7.38 & 5.53 & 7.25 \\
        & \textbf{ExitTrain + TEV} & \textbf{3.83} & \textbf{5.03} & \textbf{7.83} & \textbf{10.32} & \textbf{5.61} & \textbf{7.43} & \textbf{6.00} & \textbf{7.74} & \textbf{5.82} & \textbf{7.63}  \\
        \midrule
        \multicolumn{12}{c}{\textit{Temperature = 1}} \\
        \midrule
        \multirow{3}{*}{Qwen3-4B}
        & DDTree & 3.44 & 5.04 & 5.22 & 7.53 & 5.48 & 7.85 & 5.65 & 7.88 & 4.95 & 7.08 \\
        & Draft-OPD & 3.74 & 5.26 & 5.49 & 7.80 & 5.88 & 8.58 & 6.08 & 8.50 & 5.30 & 7.54 \\
        & \textbf{ExitTrain + TEV} & \textbf{3.98} & \textbf{5.86} & \textbf{5.64} & \textbf{8.06} & \textbf{6.47} & \textbf{9.25} & \textbf{6.26} & \textbf{9.08} & \textbf{5.59} & \textbf{8.06} \\
        \cmidrule{1-12}
        \multirow{3}{*}{Qwen3-8B}
        & DDTree & 3.42 & 5.03 & 4.83 & 7.06 & 5.23 & 7.84 & 5.50 & 8.17 & 4.74 & 7.03 \\
        & Draft-OPD & 3.51 & 5.21 & 4.95 & 7.29 & 5.79 & 8.57 & 5.66 & 8.28 & 4.98 & 7.34 \\
        & \textbf{ExitTrain + TEV} & \textbf{3.74} & \textbf{5.40} & \textbf{5.11} & \textbf{7.47} & \textbf{6.35} & \textbf{9.12} & \textbf{5.96} & \textbf{8.59} & \textbf{5.29} & \textbf{7.65} \\
        \cmidrule{1-12}
        \multirow{2}{*}{LLaMA-3.1-8B}
        & DDTree & 2.43 & 3.97 & 3.71 & 5.67 & 3.07 & 4.81 & 2.26 & 3.71 & 2.87 & 4.54 \\
        & \textbf{ExitTrain + TEV} & \textbf{2.52} & \textbf{4.08} & \textbf{3.84} & \textbf{5.93} & \textbf{3.67} & \textbf{5.77} & \textbf{2.63} & \textbf{4.25} & \textbf{3.16} & \textbf{5.01} \\
        \cmidrule{1-12}
        \multirow{2}{*}{Gemma-4-12B-IT}
        & DDTree & 3.65 & 4.71 & 4.13 & 5.06 & 4.48 & 5.41 & 4.57 & 5.78 & 4.21 & 5.24 \\
        & \textbf{ExitTrain + TEV} & \textbf{3.83} & \textbf{4.83} & \textbf{4.39} & \textbf{5.28} & \textbf{4.70} & \textbf{5.83} & \textbf{4.74} & \textbf{6.02} & \textbf{4.41} & \textbf{5.49} \\
        \cmidrule{1-12}
        \multirow{2}{*}{Qwen3-Coder-30B-A3B}
        & DDTree & 3.48 & 4.53 & 7.26 & 9.61 & 5.22 & 6.75 & 5.43 & 7.13 & 5.35 & 7.01 \\
        & \textbf{ExitTrain + TEV} & \textbf{3.61} & \textbf{4.72} & \textbf{7.39} & \textbf{9.92} & \textbf{5.57} & \textbf{7.17} & \textbf{5.78} & \textbf{7.42} & \textbf{5.59} & \textbf{7.31} \\
        \bottomrule
    \end{tabular}}
    \end{small}
    \end{center}
    \vskip -0.10in
    \vspace{-4mm}
\end{table}

\subsection{Main Results}
\label{sec:main_results}

Table~\ref{tab:main_results} shows that ExitTrain + TEV achieves the highest $\tau$ and $SR$ in every reported setting. The Qwen3-4B/8B comparisons provide the most controlled view because they use the same builder and budget and include both DDTree and Draft-OPD. Across the two model sizes and temperatures, ExitTrain + TEV improves average $\tau$ over DDTree by 8.8\%--13.8\% and average $SR$ by 11.5\%--13.8\%. It also exceeds Draft-OPD by 4.2\%--6.9\% in $\tau$ and 5.5\%--7.3\% in $SR$, showing that the gains persist over both the released and position-wise-distilled drafters under greedy and stochastic decoding.

The improvement also transfers across target-model families. On LLaMA-3.1-8B, ExitTrain + TEV improves average $\tau$ by 17.7\%/10.4\% and $SR$ by 16.1\%/10.1\% at $T{=}0/1$. The corresponding $\tau$ and $SR$ gains are 5.6\%/4.8\% and 5.7\%/4.8\% on Gemma-4-12B-IT, and 5.2\%/4.3\% and 5.2\%/4.5\% on Qwen3-Coder-30B-A3B. Thus, the advantage remains positive for dense, instruction-tuned, and MoE targets, although its magnitude varies across model families.

The gains also exhibit meaningful task variation. On Qwen3, GSM8K yields the largest increases in $\tau$: 20.5\%--22.4\% at $T{=}0$ and 16.3\%--17.8\% at $T{=}1$, while HumanEval shows smaller but consistent improvements. More broadly, increases in $\tau$ translate consistently into higher $SR$. Because the builder and budget ($B=64$) are fixed, the additional accepted length cannot come from verifying more candidates; it is the empirical signature of better target-prefix coverage, which raises the saturated output-block length $1+C_p(\mathcal{T})$.

\paragraph{Crossing drafter training and verification.}
The crossed evaluation in Appendix~\ref{app:crossed_experiment} confirms this attribution on Qwen3-4B at $T{=}1$. Holding Predraw fixed, exit training raises average $\tau$ from 7.08 to 8.05 and $SR$ from $4.95\times$ to $5.55\times$; with TEV, the corresponding changes are 7.09 to 8.06 and $4.97\times$ to $5.59\times$. In contrast, for a fixed drafter, Predraw and TEV differ by at most 0.03 tokens per benchmark, consistent with equal expected output-block length under Theorem~\ref{thm:unique_saturated_law}; the small empirical discrepancies arise from separate finite-sample decoding runs. TEV nevertheless improves average end-to-end $SR$ by 0.4\% with the original drafter and 0.7\% with the exit-trained drafter. Together, these results separate the two contributions: ExitTrain supplies most of the acceptance gain by improving the tree, while TEV provides orthogonal execution savings, reducing verifier-decision and cache-commit latency by 12.0\%--24.3\% across the tested budgets (Appendix~\ref{app:verifier_efficiency}).

\subsection{Ablation Study}
\label{sec:ablation}

\begin{table}[t]
    \caption{Effect of the draft-training objective on Qwen3-4B. All three trained drafters share the initialization, the \texttt{open-perfectblend} mixture, the compute budget, the tree builder, and the budget $B=64$}
    \label{tab:training_ablation}
    \vskip 0.08in
    \vspace{-4mm}
    \begin{center}
    \begin{small}
    \resizebox{1.00\columnwidth}{!}{
    \setlength{\tabcolsep}{4pt}
    \begin{tabular}{l||cc|cc|cc|cc||cc}
        \toprule
        & \multicolumn{2}{c|}{\textbf{MT-Bench}} & \multicolumn{2}{c|}{\textbf{HumanEval}} & \multicolumn{2}{c|}{\textbf{GSM8K}} & \multicolumn{2}{c||}{\textbf{MATH-500}} & \multicolumn{2}{c}{\textbf{Average}} \\
        \cmidrule{2-11}
        \textbf{Draft training objective} & $SR\uparrow$ & $\tau\uparrow$ & $SR\uparrow$ & $\tau\uparrow$ & $SR\uparrow$ & $\tau\uparrow$ & $SR\uparrow$ & $\tau\uparrow$ & $SR\uparrow$ & $\tau\uparrow$ \\
        \midrule
        \multicolumn{11}{c}{\textit{Temperature = 0}} \\
        \midrule
        Original DFlash drafter & 3.81 & 5.41 & 5.71 & 8.13 & 5.65 & 8.48 & 6.83 & 9.16 & 5.50 & 7.80 \\
        Finetuned DFlash drafter & 3.97 & 5.71 & 5.83 & 8.36 & 6.33 & 9.54 & 6.91 & 9.83 & 5.76 & 8.36 \\
        Prefix weight $P(v)$ & 4.05 & 5.82 & 5.99 & 8.43 & 6.07 & 9.25 & 6.71 & 9.33 & 5.71 & 8.21 \\
        \textbf{Exit weight $w(v)$ (ours)} & \textbf{4.28} & \textbf{6.17} & \textbf{6.09} & \textbf{8.74} & \textbf{6.91} & \textbf{10.22} & \textbf{7.23} & \textbf{10.26} & \textbf{6.13} & \textbf{8.85} \\
        \midrule
        \multicolumn{11}{c}{\textit{Temperature = 1}} \\
        \midrule
        Original DFlash drafter & 3.44 & 5.04 & 5.22 & 7.53 & 5.48 & 7.85 & 5.65 & 7.88 & 4.95 & 7.08 \\
        Finetuned DFlash drafter & 3.58 & 5.20 & 5.43 & 7.63 & 6.18 & 8.58 & 5.86 & 8.46 & 5.26 & 7.47 \\
        Prefix weight $P(v)$ & 3.69 & 5.45 & 5.48 & 7.88 & 6.26 & 8.85 & 5.99 & 8.68 & 5.35 & 7.72 \\
        \textbf{Exit weight $w(v)$ (ours)} & \textbf{3.98} & \textbf{5.86} & \textbf{5.64} & \textbf{8.06} & \textbf{6.47} & \textbf{9.25} & \textbf{6.26} & \textbf{9.08} & \textbf{5.59} & \textbf{8.06} \\
        \bottomrule
    \end{tabular}}
    \end{small}
    \end{center}
    \vskip -0.10in
    \vspace{-4mm}
\end{table}

\paragraph{Exit-guided training.}
We compare the released DFlash checkpoint with three variants trained under matched initialization, data, and compute: token-only finetuning, prefix-weighted training, and exit-guided training. All use the same builder and $B=64$. The prefix-weighted control replaces $w(v)$ with $P(v)$, rescaled to match the total non-root exit weight while retaining the same loss denominator. The two controls therefore separate the benefit of additional training from the benefit of assigning credit specifically to exit boundaries.

Table~\ref{tab:training_ablation} first shows that additional training is itself useful: token-only finetuning increases average $\tau$ from 7.80/7.08 to 8.36/7.47 at $T{=}0/1$. Exit-guided training then improves over this matched control by a further 5.9\%/7.9\% in $\tau$ and 6.4\%/6.3\% in $SR$, demonstrating that its gain is not explained by additional training on \texttt{open-perfectblend} alone. The $SR$ numbers compare the complete systems; the crossed evaluation in Sec.~\ref{sec:main_results} shows that TEV contributes only 0.7\% end-to-end with the exit-trained drafter, so the dominant improvement comes from training.

The prefix-weighted control tests whether target reachability alone provides sufficient tree supervision. It reaches $\tau=8.21/7.72$, whereas exit weighting reaches 8.85/8.06, improvements of 7.8\%/4.4\%; the corresponding $SR$ gains are 7.4\%/4.5\%. Moreover, prefix weighting trails token-only finetuning at $T{=}0$, despite using target prefix probabilities. The distinction is that $P(v)$ measures how often a node is reached, whereas $w(v)=P(v)\rho(v)$ additionally identifies reached boundaries whose selected children leave substantial target mass uncovered (Eq.~\eqref{eq:exit_as_expansion_gain}). The results therefore support localized exit feedback rather than reachability weighting alone.

\paragraph{Verifier-stage efficiency.}
Appendix~\ref{app:verifier_efficiency} and Table~\ref{tab:verifier_latency} isolate verifier execution by holding the drafter, builder, and realized tree fixed and excluding the shared target-model tree forward pass. TEV reduces verifier-decision and cache-commit latency at every tested budget: 12.9\%/12.0\% at $B=32$, 15.2\%/14.0\% at $B=64$, and 24.3\%/17.1\% at $B=128$ for $T=0.6/1.0$. The increasing advantage at wider trees is consistent with replacing Predraw's per-node sampling with level-wise probability propagation and one exit-node draw. Because this stage occupies only a small fraction of each decoding cycle, these reductions translate into the smaller end-to-end gains observed in the crossed evaluation.

\paragraph{Tree-budget robustness.}
Appendix~\ref{app:budget_ablation} and Table~\ref{tab:budget_ablation} evaluate the $B=64$-trained drafter at $B\in\{32,64,128\}$ without budget-specific retraining. ExitTrain + TEV remains best in all six budget--temperature settings: relative to DDTree, it improves $\tau$ by 11.6\%--13.8\% and $SR$ by 11.4\%--12.9\%; relative to Draft-OPD, the gains are 5.6\%--6.9\% and 4.5\%--6.2\%. Its advantage therefore persists for both narrower and wider inference trees, indicating that the learned exit-guided allocation is not tied to the training-time tree size.

\vspace{-2mm}
\section{Conclusion}
\label{sec:conclusion}
\vspace{-2mm}

We study tree speculative decoding through target probability flow and show that the maximum expected output-block length on a fixed tree is one plus its target-mass coverage. The canonical exit law gives a verifier-independent ceiling: saturating exact verifiers share the same exit and bonus-token law, so missing support cannot be recovered by verification alone. This suggests two complementary directions. TEV realizes the optimal law with a regular, level-parallel procedure to reduce verification cost, while exit-guided training uses node-level exit signals to improve finite-budget coverage and output-block length. Overall, our framework separates \emph{what tree to build} from \emph{how to verify it efficiently}.

\paragraph{Limitations.}
ExitTrain incurs one-time training overhead from target rollouts and tree evaluation, but fine-tunes released DFlash checkpoints without architectural or verifier changes and can be amortized during deployment. Our systems results are limited to batch-one decoding with DDTree; alternative builders and high-throughput serving remain future work.

\section*{AI Use Statement}
Generative AI tools were used solely for language polishing and improving the clarity and readability of the manuscript. The authors reviewed all AI-assisted edits and take full responsibility for the final content of this work.

\section*{Ethics Statement}
This work studies the efficiency of lossless large language model inference. It does not involve human subjects, private user data, or the collection of new personal information. Our experiments use publicly available models and benchmarks under their respective terms of use. By construction, TEV preserves the output distribution of the target model and does not intentionally modify its capabilities, safety behavior, or intended use. More efficient inference may reduce computational and energy costs, but may also lower the cost of deploying language models at scale. Consequently, downstream uses of our method inherit the potential safety, bias, privacy, and misuse risks of the underlying target models. Users remain responsible for evaluating these risks and complying with the licenses and usage policies of the deployed models and datasets.

\section*{Reproducibility Statement}
Sections~\ref{sec:coverage}--\ref{sec:draft_training} provide the definitions, assumptions, fixed-tree results, TEV construction, and exit-guided training objective used in this work. Complete proofs of the fixed-tree ceiling, verifier equivalence, and TEV exactness are provided in Appendices~\ref{app:fixed_tree_theory} and~\ref{app:tev}. Additional derivations and implementation details for the exit-guided objective are given in Appendix~\ref{app:exit_training_details}, while the model, dataset, training, hardware, and evaluation configurations are documented in Appendix~\ref{app:experimental_details}. An anonymized repository at \url{https://github.com/hsj576/TEV} will provide the training and inference code, configuration files, evaluation scripts, and instructions required to reproduce the reported experiments.

\bibliography{iclr2027_conference}
\bibliographystyle{iclr2027_conference}

\appendix

\section{Detailed Theory for Fixed-Tree Verification}
\label{app:fixed_tree_theory}

This appendix provides the technical details for Section~\ref{sec:coverage}.
Unless stated otherwise, we condition on a decoding context $c$ and a fixed,
finite, rooted, prefix-closed draft tree
$\mathcal{T}=(\mathcal{V},\mathcal{E},r)$.
Sibling edges carry distinct draft tokens.
For each node $v$, $p_v$ is its target next-token distribution, $P(v)$ is its
target prefix probability, $\rho(v)$ is its local exit probability, and
$w(v)=P(v)\rho(v)$ is its exit probability.

\subsection{Random trees, auxiliary randomness, and round-wise composition}
\label{app:random_tree_conditioning}

The results in Section~\ref{sec:coverage} are stated conditional on a realized
draft tree, so the tree builder may itself be deterministic or stochastic.
Let $Q(\cdot\mid c)$ denote the conditional law of $\mathcal{T}$, which is
constructed before verification, and let $R(\cdot\mid c,\mathcal{T})$ denote
the conditional law of auxiliary control randomness $U$ available at time zero.
This $U$ may randomize stopping or control decisions but excludes randomness
represented by the coupled target continuation.  For each realized pair
$(\mathcal{T},U)$,
suppose there exists a measurable conditional coupling
$\Gamma_{\mathcal{T},U}$ of the output block $O$, a target continuation $Z$,
and its length $L=|O|$ such that
\begin{equation}
    Z\mid(c,\mathcal{T},U)\sim p(\cdot\mid c),
    \qquad
    O=Z_{1:L}\quad \Gamma_{\mathcal{T},U}\text{-almost surely},
    \label{eq:app_conditional_coupling}
\end{equation}
and $L$ is a stopping time with respect to
$\mathcal{H}_t=\sigma(\mathcal{T},U,Z_1,\ldots,Z_t)$.  Construct the joint
coupling by first drawing $\mathcal{T}\sim Q(\cdot\mid c)$, then
$U\sim R(\cdot\mid c,\mathcal{T})$, and finally $(O,Z,L)$ from
$\Gamma_{\mathcal{T},U}$.  For any measurable set $A$ of target
continuations,
\begin{align}
    \Pr(Z\in A\mid c)
    &=\mathbb{E}_{\mathcal{T},U\mid c}
      [\Pr(Z\in A\mid c,\mathcal{T},U)]\\
    &=p(A\mid c).
\end{align}
Moreover, $O=Z_{1:L}$ almost surely, and $\{L\leq t\}$ remains
$\mathcal{H}_t$-measurable.  Thus, conditional non-anticipating exactness is
preserved after marginalizing over random tree construction and auxiliary
randomness.  In our setting, neither the drafter-constructed tree nor $U$ has
access to future target randomness.

\begin{lemma}[Round-wise composition]
\label{lem:round_composition}
At round $k$, let $c_k$ be the current context, $\mathcal{T}_k$ the realized
tree, $U_k$ the auxiliary control randomness, $Z^{(k)}$ the coupled target
continuation, and $L_k$ the output-block length, so
$O_k=Z^{(k)}_{1:L_k}$. Suppose each round satisfies
Eq.~\eqref{eq:app_conditional_coupling} with these round-indexed quantities and
$L_k$ is a stopping time with respect to
$\mathcal{H}_{k,t}:=\sigma(\mathcal{T}_k,U_k,Z^{(k)}_1,\ldots,Z^{(k)}_t)$.
Then repeatedly appending the returned blocks preserves the autoregressive
distribution induced by $p$.
\end{lemma}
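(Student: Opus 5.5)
The plan is to prove the round-wise composition by induction on the number of completed rounds, working at the level of full target continuations. Fix an initial context $c_1$. Let $N_k=\sum_{j<k}L_j$ be the number of tokens emitted before round $k$, and let $c_k=c_1\mathbin{\Vert}O_1\mathbin{\Vert}\cdots\mathbin{\Vert}O_{k-1}$. The claim to establish is the following: for every $k$ and every deterministic horizon $n$, the first $n$ generated tokens are distributed as $p(\cdot\mid c_1)$ restricted to length $n$. This holds on the event that at least $n$ tokens have been emitted, and this event has probability tending to one, since each $L_k\geq1$.

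The main device is to splice the round continuations into one global sequence. Define
\[
\tilde Z \;=\; O_1\mathbin{\Vert}O_2\mathbin{\Vert}\cdots\mathbin{\Vert}O_{k-1}\mathbin{\Vert}Z^{(k)},
\]
and more generally let the global sequence be the concatenation of all blocks. We show by induction on $k$ that $\tilde Z^{(k)}:=c_1$-continuation $O_1\Vert\cdots\Vert O_{k-1}\Vert Z^{(k)}$ is distributed as $p(\cdot\mid c_1)$.

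The base case $k=1$ is exactly Eq.~\eqref{eq:app_conditional_coupling}.

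For the inductive step, we use the identity $O_k=Z^{(k)}_{1:L_k}$. It lets us write $\tilde Z^{(k)}=O_1\Vert\cdots\Vert O_{k-1}\Vert Z^{(k)}_{1:L_k}\Vert Z^{(k)}_{L_k+1:}$. We then need to show the following: conditional on the history $\mathcal{G}_k$ (all rounds up to $k$, including $L_k$ and $O_k$), the tail $Z^{(k)}_{L_k+1:}$ may be replaced by $Z^{(k+1)}$ without changing the joint law. Round $k+1$ gives $Z^{(k+1)}\mid(c_{k+1},\mathcal{T}_{k+1},U_{k+1})\sim p(\cdot\mid c_{k+1})$. Since the tree and $U_{k+1}$ are built from $c_{k+1}$ and fresh randomness only, this marginalizes to $Z^{(k+1)}\mid\mathcal{G}_k\sim p(\cdot\mid c_{k+1})$, by the same tower argument as in the paragraph preceding the lemma.

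It therefore suffices to show that the tail $Z^{(k)}_{L_k+1:}$, conditional on $\mathcal{G}_k$, also has law $p(\cdot\mid c_k\Vert Z^{(k)}_{1:L_k})=p(\cdot\mid c_{k+1})$. This is a strong Markov property for the autoregressive process, and it is the step that needs the stopping-time hypothesis. For each fixed $t$ and each $A\in\mathcal{H}_{k,t}$, the event $\{L_k=t\}\cap A$ is determined by $(\mathcal{T}_k,U_k,Z^{(k)}_{1:t})$. Because $Z^{(k)}\mid(\mathcal{T}_k,U_k)\sim p(\cdot\mid c_k)$, the chain rule gives $Z^{(k)}_{t+1:}\mid(\mathcal{T}_k,U_k,Z^{(k)}_{1:t})\sim p(\cdot\mid c_k\Vert Z^{(k)}_{1:t})$. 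Summing over $t$ then yields the required conditional law of the tail given $\mathcal{H}_{k,L_k}$.

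The main obstacle is that $\mathcal{G}_k$ also contains the earlier rounds. We must check that, given $c_k$, the quantities $(\mathcal{T}_k,U_k,Z^{(k)})$ are conditionally independent of the earlier rounds beyond $c_k$. This is how the round-wise coupling is constructed, since each round draws its tree, control randomness and coupling afresh given the current context. I would state it explicitly as the Markov structure of the construction.

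Given this, the tail and $Z^{(k+1)}$ have the same conditional law given $\mathcal{G}_k$. Hence $\tilde Z^{(k)}\overset{d}{=}\tilde Z^{(k+1)}$, and by induction every $\tilde Z^{(k)}\sim p(\cdot\mid c_1)$.

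Finally, the first $n$ emitted tokens coincide with $\tilde Z^{(k)}_{1:n}$ whenever $N_k\geq n$, which holds for $k>n$ because each $L_j\geq 1$. Therefore every finite-dimensional marginal of the appended output equals that of $p(\cdot\mid c_1)$. So repeated appending preserves the autoregressive distribution induced by $p$.
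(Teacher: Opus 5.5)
Your proposal is correct and takes the same approach as the paper's proof. The stopping-time property makes the unrevealed tail $Z^{(k)}_{L_k+1:}$ distributed as $p(\cdot\mid c_k\Vert O_k)=p(\cdot\mid c_{k+1})$, which is the law of the fresh round-$(k+1)$ continuation, and induction over rounds finishes the argument. Your spliced sequence $\tilde Z^{(k)}$ and your explicit cross-round fresh-randomness assumption spell out what the paper's short proof leaves implicit. One small correction: because each $L_k\geq 1$, at least $n$ tokens are emitted after $n$ rounds with certainty, not merely with probability tending to one.
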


\begin{proof}
For a realized round-$k$ output length $L_k=\ell$, the event $\{L_k=\ell\}$ is
measurable with respect to $\mathcal{H}_{k,\ell}$ and therefore depends only on
$(\mathcal{T}_k,U_k,Z^{(k)}_{1:\ell})$, not on $Z^{(k)}_{\ell+1:}$.  Because
$Z^{(k)}\mid(c_k,\mathcal{T}_k,U_k)\sim p(\cdot\mid c_k)$, conditioning on the
emitted block $O_k=Z^{(k)}_{1:\ell}$ and on the stopping event does not change the
target law of the unrevealed continuation beyond conditioning on the emitted
prefix itself. Restarting round $k+1$ from the updated context therefore uses
exactly the same conditional law as ordinary target decoding. Induction over
$k$ proves the claim.
\end{proof}

\subsection{Exit partition, probability conservation, and target coverage}
\label{app:exit_partition_proof}

\begin{proof}[Proof of Lemma~\ref{prop:exit_partition}]
Let $Z=(Z_1,Z_2,\ldots)\sim p(\cdot\mid c)$.
Because $\mathcal{T}$ is finite and prefix-closed, there is a unique deepest
node $V^\star(Z)$ such that $\pi(V^\star(Z))\preceq Z$.
If $V^\star(Z)$ is internal, the next target token cannot equal the draft
token of any child; otherwise the continuation would match a deeper node.
If $V^\star(Z)$ is a leaf, it is automatically the deepest matching node.
Hence the events
\begin{equation}
    E_v:=\{V^\star(Z)=v\},
    \qquad v\in\mathcal{V},
\end{equation}
are disjoint and exhaustive.

The continuation reaches $v$ with probability
\begin{equation}
    \Pr(\pi(v)\preceq Z\mid c,\mathcal{T})
    =p(\pi(v)\mid c)=P(v).
\end{equation}
Conditioned on reaching $v$, its next token is outside the drafted child-token
set $\mathcal{A}(v)$ with probability
\begin{equation}
    1-\sum_{u\in\Ch(v)}p_v(x_u)=\rho(v).
\end{equation}
Therefore,
\begin{equation}
    \Pr(E_v\mid c,\mathcal{T})
    =P(v)\rho(v)=w(v).
\end{equation}
Since the events $\{E_v\}_{v\in\mathcal{V}}$ form a partition,
\begin{equation}
    \sum_{v\in\mathcal{V}}w(v)=1.
\end{equation}
\end{proof}

The exit probabilities also satisfy a local probability-conservation identity:
\begin{align}
    w(v)+\sum_{u\in\Ch(v)}P(u)
    &=P(v)\rho(v)
      +\sum_{u\in\Ch(v)}P(v)p_v(x_u)\\
    &=P(v).
    \label{eq:app_local_conservation}
\end{align}
Thus, target probability reaching $v$ either exits at $v$ or continues through
one drafted child.  Summing Equation~\ref{eq:app_local_conservation} over the
subtree rooted at a node $u$ cancels every descendant prefix probability and
gives
\begin{equation}
    \sum_{v\succeq u} w(v)=P(u),
    \label{eq:app_subtree_conservation}
\end{equation}
where $v\succeq u$ means that $v$ lies in the subtree rooted at $u$.

Finally, $\depth(v)$ equals the number of non-root nodes on the path from the
root to $v$.  Using Equation~\ref{eq:app_subtree_conservation} and exchanging
the order of summation,
\begin{align}
    \sum_{v\in\mathcal{V}}w(v)\depth(v)
    &=\sum_{v\in\mathcal{V}}w(v)
      \sum_{u\in\mathcal{V}\setminus\{r\}}\1[v\succeq u]\\
    &=\sum_{u\in\mathcal{V}\setminus\{r\}}
      \sum_{v\succeq u}w(v)\\
    &=\sum_{u\in\mathcal{V}\setminus\{r\}}P(u).
\end{align}
This proves the two equivalent forms of target coverage in
Equation~\ref{eq:eal_two_decompositions}.

\subsection{Proof of the fixed-tree ceiling and canonical saturation}
\label{app:ceiling_uniqueness_proof}

\begin{proof}[Proof of Theorem~\ref{thm:unique_saturated_law}]
Let $\mathsf{A}$ be an exact path verifier.  Under the coupling defined in
Sec.~\ref{sec:setup}, let $Z\sim p(\cdot\mid c)$ be the target continuation and
let the verifier output be $O=Z_{1:L}$.  The ceiling argument uses only this
target-prefix identity, not the stopping-time property.  For every non-root
node $u$, define
\begin{align}
    A_u&:=\{u\text{ is included in the accepted draft path}\},\\
    F_u&:=\{\pi(u)\preceq Z\}.
\end{align}
Whenever $A_u$ occurs, the verifier output begins with $\pi(u)$ and is a prefix
of $Z$; therefore $A_u\subseteq F_u$ almost surely.  Consequently,
\begin{equation}
    \Pr_{\mathsf{A}}(A_u\mid c,\mathcal{T})
    \leq \Pr(F_u\mid c,\mathcal{T})
    =P(u),
    \label{eq:app_node_bound}
\end{equation}
which proves Equation~\ref{eq:retention_bound}.

An accepted root-to-node path contains exactly one non-root node for every
accepted draft token, so
\begin{equation}
    D_{\mathsf{A}}
    =\sum_{u\in\mathcal{V}\setminus\{r\}}\1[A_u].
\end{equation}
Taking expectations and applying Equation~\ref{eq:app_node_bound} gives
\begin{align}
    \mathbb{E}[D_{\mathsf{A}}\mid c,\mathcal{T}]
    &=\sum_{u\neq r}\Pr_{\mathsf{A}}(A_u\mid c,\mathcal{T})\\
    &\leq\sum_{u\neq r}P(u)
      =C_p(\mathcal{T}).
\end{align}
Since $L_{\mathsf{A}}=D_{\mathsf{A}}+1$, we also have
$\mathbb{E}[L_{\mathsf{A}}\mid c,\mathcal{T}]\leq1+C_p(\mathcal{T})$, proving
Equation~\ref{eq:eal_upper_bound}.

Now suppose that $\mathsf{A}$ is saturated.  For each non-root node, let
\begin{equation}
    \delta_u
    :=P(u)-\Pr_{\mathsf{A}}(A_u\mid c,\mathcal{T})\geq 0.
\end{equation}
Saturation implies $\sum_{u\neq r}\delta_u=0$, and hence every
$\delta_u=0$.  Thus
\begin{equation}
    \Pr_{\mathsf{A}}(A_u\mid c,\mathcal{T})=P(u)
    \qquad \text{for every }u\neq r.
    \label{eq:app_tight_retention}
\end{equation}
Since $A_u\subseteq F_u$ and both events have the same probability,
$A_u=F_u$ almost surely.  Because the draft tree is finite, these equalities
hold simultaneously for all nodes outside a single null event.

Let $A_r=F_r$ be the full sample space.  The accepted path terminates at $v$
exactly when $v$ is accepted and none of its children is accepted.  The child
acceptance events are disjoint, so
\begin{align}
    \Pr_{\mathsf{A}}(V=v\mid c,\mathcal{T})
    &=\Pr_{\mathsf{A}}(A_v\mid c,\mathcal{T})
      -\sum_{u\in\Ch(v)}
       \Pr_{\mathsf{A}}(A_u\mid c,\mathcal{T})\\
    &=P(v)-\sum_{u\in\Ch(v)}P(u)\\
    &=P(v)\rho(v)=w(v).
    \label{eq:app_saturated_exit_law}
\end{align}
Moreover, the simultaneous identities $A_u=F_u$ imply that $V$ equals the
target-induced exit node $V^\star(Z)$ almost surely.  Since the verifier output
satisfies $O=\pi(V)\mathbin{\Vert}Y=Z_{1:L}$ with $L=\depth(V)+1$, $Y$ is the target
token immediately after $\pi(V)$.  Therefore, for every internal node $v$ with
$w(v)>0$,
\begin{equation}
    \Pr(Y=a\mid V=v,c,\mathcal{T})
    =\frac{p_v(a)\1[a\notin\mathcal{A}(v)]}{\rho(v)}
    =p_v^{\mathrm{res}}(a).
\end{equation}
At a leaf, $\mathcal{A}(v)=\emptyset$ and $\rho(v)=1$, so the conditional law
is $p_v$.  This proves Equation~\ref{eq:canonical_output_law} and the uniqueness
of the accepted-path and bonus-token law at saturation.
\end{proof}

\subsection{Proof that the representative verifiers are saturated}
\label{app:verifier_equivalence_proof}

We consider the two target-local procedures defined in
Section~\ref{sec:background}.  Both operate on the same fixed draft tree and
the same target distributions $\{p_v\}$.  Sibling draft tokens are distinct.
For the target-induced flow, define
$L^\star:=\depth(V^\star(Z))+1$.  This is the first index at which the matched
node is a leaf or $Z_{L^\star}$ is not a drafted child token, so
$\{L^\star\leq t\}$ is determined by $Z_{1:t}$ and $L^\star$ is a stopping
time.  We show that each procedure realizes the joint target-induced exit and
bonus-token law; coupling its output to
$Z_{1:L^\star}=\pi(V^\star(Z))\mathbin{\Vert}Y^\star(Z)$ therefore establishes
non-anticipating exactness as well as saturation.

\subsubsection{Predraw-and-follow}

At every node $v$, predraw-and-follow independently draws a token
$R_v\sim p_v$.  Starting from the root, it follows child $u\in\Ch(v)$ if
$R_v=x_u$; otherwise it exits at $v$ and returns $R_v$ as the bonus token.

Fix a node $v$ and write its root-to-node path as
$r=v_0,v_1,\ldots,v_d=v$.  The procedure reaches $v$ exactly when
$D_{v_{i-1}}=x_{v_i}$ for all $i=1,\ldots,d$.  Hence
\begin{align}
    \Pr(\text{reach }v\mid c,\mathcal{T})
    &=\prod_{i=1}^{d}p_{v_{i-1}}(x_{v_i})
      =P(v).
\end{align}
Conditioned on reaching $v$, the independent draw $R_v$ is outside the drafted
child-token set with probability
\begin{equation}
    1-\sum_{u\in\Ch(v)}p_v(x_u)=\rho(v).
\end{equation}
Therefore,
\begin{equation}
    \Pr(\text{exit at }v\mid c,\mathcal{T})
    =P(v)\rho(v)=w(v).
\end{equation}
Conditioned on an internal exit, the bonus token follows
$p_v^{\mathrm{res}}$; at a leaf it follows $p_v$.  Thus the accepted-path and
bonus-token law matches the target-induced law, so predraw-and-follow is exact.
Its expected accepted draft-token count is
$\sum_v w(v)\depth(v)=C_p(\mathcal{T})$, so its expected output-block length is
$1+C_p(\mathcal{T})$ and it is saturated.

\subsubsection{Exact sequential residual verification}

Fix an internal node $v$ reached by the verifier.  Let its drafted children be
tested in an arbitrary order $u_1,\ldots,u_k$, with draft tokens
$x_1,\ldots,x_k$.  Define
\begin{equation}
    p_j:=p_v(x_j),
    \qquad
    M_j:=\sum_{\ell=1}^{j}p_\ell,
    \qquad
    M_0:=0.
\end{equation}
Let $g^{(0)}:=p_v$.  If child $u_j$ is rejected and the rejection event has
positive probability, exact residual renormalization removes token $x_j$:
\begin{equation}
    g^{(j)}(a)
    :=\frac{g^{(j-1)}(a)\1[a\neq x_j]}
    {1-g^{(j-1)}(x_j)}.
    \label{eq:app_residual_update}
\end{equation}
At step $j$, the verifier follows $u_j$ with conditional probability
$g^{(j-1)}(x_j)$.  If all children are rejected, it draws the bonus token from
$g^{(k)}$.  A leaf has no child tests and returns a bonus token from $p_v$.

For every $j$ with $M_j<1$, induction gives
\begin{equation}
    g^{(j)}(a)=\frac{p_v(a)}{1-M_j},
    \qquad a\notin\{x_1,\ldots,x_j\}.
    \label{eq:app_residual_induction}
\end{equation}
Indeed, assuming the identity at $j-1$,
\begin{align}
    g^{(j)}(a)
    &=\frac{p_v(a)/(1-M_{j-1})}
    {1-p_j/(1-M_{j-1})}
      =\frac{p_v(a)}{1-M_j}.
\end{align}
If $M_j=1$, rejecting the first $j$ children has probability zero, so no later
residual state affects the output.

Before child $u_j$ is tested,
\begin{equation}
    g^{(j-1)}(x_j)=\frac{p_j}{1-M_{j-1}}.
\end{equation}
The probability of rejecting the first $j-1$ children telescopes to
\begin{align}
    \prod_{i=1}^{j-1}
    \left(1-g^{(i-1)}(x_i)\right)
    &=\prod_{i=1}^{j-1}\frac{1-M_i}{1-M_{i-1}}
      =1-M_{j-1}.
\end{align}
Thus, the unconditional probability of following child $u_j$ is
\begin{equation}
    (1-M_{j-1})\frac{p_j}{1-M_{j-1}}
    =p_j=p_v(x_j).
\end{equation}
Similarly, the probability of rejecting every child is
\begin{equation}
    \prod_{i=1}^{k}\frac{1-M_i}{1-M_{i-1}}
    =1-M_k=\rho(v),
\end{equation}
and, whenever $\rho(v)>0$, the final residual target distribution is
\begin{equation}
    g^{(k)}(a)
    =\frac{p_v(a)\1[a\notin\mathcal{A}(v)]}{\rho(v)}
    =p_v^{\mathrm{res}}(a).
\end{equation}
Hence the local transition kernel is independent of the order of the drafted
children: each child $u$ is followed with probability $p_v(x_u)$, while the
remaining probability $\rho(v)$ exits with bonus-token law
$p_v^{\mathrm{res}}$.  Multiplying the child-transition probabilities along a
root-to-$v$ path gives reach probability $P(v)$ and exit probability $w(v)$.
Thus the accepted-path and bonus-token law matches the target-induced law, so
the verifier is exact. Its expected accepted draft-token count is
$C_p(\mathcal{T})$, its expected output-block length is $1+C_p(\mathcal{T})$,
and hence it is saturated.

\subsection{Why the fixed-tree result does not contradict HSD}
\label{app:sampled_chain_relation}

The fixed-tree ceiling applies after a draft tree has already been realized.
HSD~\citep{zhou2026hsd} studies a different probability structure: a draft
chain $x_{1:k}$ is sampled from draft distributions $q_1,\ldots,q_k$, and
verification retains the target-to-draft ratios
\begin{equation}
    r_t:=\frac{p_t(x_t)}{q_t(x_t)}.
\end{equation}
For a realized chain, position-wise speculative verification accepts the full
chain with probability
\begin{equation}
    \prod_{t=1}^{k}\min\{1,r_t\}
    \leq
    \min\!\left\{1,\prod_{t=1}^{k}r_t\right\}.
\end{equation}
When ratios above and below one occur along the same chain, position-wise
clipping prevents surplus at one position from compensating for a deficit at
another; hierarchical chain verification can recover part of this joint
compensation.

Our analysis instead conditions on a realized draft tree.  At that point, the
draft model affects verification through which draft tokens are present in the
tree, while the fixed-tree ceiling follows from target prefix events.  Even if
a verifier also consults draft scores, no exact path verifier can accept a draft
token absent from the realized tree.  The two results therefore address
different stages of speculative decoding and are complementary rather than
contradictory.

\section{Correctness of Tree Exit Verification}
\label{app:tev}

\algtext*{EndFor}
\begin{algorithm}[t]
    \caption{Tree Exit Verification (\TEV) for speculative verification}
    \label{alg:tev}
    \begin{algorithmic}[1]
        \Require Fixed tree $\mathcal{T}=(\mathcal{V},\mathcal{E},r)$ and target distributions $\{p_v\}_{v\in\mathcal{V}}$
        \Ensure Accepted path followed by one bonus token
        \State $P(r)\gets 1$
        \For{each depth $d=0,1,\ldots$}
            \ForAll{$v$ at depth $d$ \textbf{in parallel}}
                \State $\rho(v)\gets1-\sum_{u\in\Ch(v)}p_v(x_u)$; $w(v)\gets P(v)\rho(v)$
                \State $P(u)\gets P(v)p_v(x_u)$ for all $u\in\Ch(v)$
            \EndFor
        \EndFor
        \State Draw $V\sim\mathrm{Categorical}(\{w(v)\}_{v\in\mathcal{V}})$
        \State Draw $Y\sim p_V$ if $V$ is a leaf, and otherwise $Y\sim p_V^{\mathrm{res}}$
        \State \Return $\pi(V)\mathbin{\Vert}Y$
    \end{algorithmic}
\end{algorithm}

\subsection{Proof of exactness and saturation of \TEV}
\label{app:tev_exactness_proof}

\begin{proof}[Proof of Corollary~\ref{prop:tev_retention}]
Fix a context $c$ and a realized finite, prefix-closed draft tree
$\mathcal{T}$.  Let $Z\sim p(\cdot\mid c)$, let $V^\star(Z)$ be its
target-induced exit node, and let $Y^\star(Z)$ be the target token immediately
following $\pi(V^\star(Z))$.

By Lemma~\ref{prop:exit_partition},
$\Pr(V^\star(Z)=v\mid c,\mathcal{T})=w(v)$.  Conditioned on an internal exit
at $v$ with $w(v)>0$, the next target token lies outside
$\mathcal{A}(v)$ and therefore follows $p_v^{\mathrm{res}}$; conditioned on a
leaf exit, it follows $p_v$.  These are exactly the two distributions used by
Algorithm~\ref{alg:tev}.  Hence
\begin{equation}
    (V,Y)_{\TEV}
    \overset{d}{=}
    \bigl(V^\star(Z),Y^\star(Z)\bigr).
    \label{eq:app_tev_joint_law}
\end{equation}
We may therefore couple the two pairs to be equal almost surely.  Under this
coupling, set $L:=\depth(V)+1=\depth(V^\star(Z))+1$.  This is the first index at
which the matched node is a leaf or the next target token leaves the drafted
child set. Hence $\{L\leq t\}$ is determined by $Z_{1:t}$ and is therefore
measurable with respect to $\mathcal{H}_t=\sigma(\mathcal{T},U,Z_{1:t})$.
Moreover,
\begin{equation}
    O=\pi(V)\mathbin{\Vert}Y=Z_{1:L}\qquad\text{almost surely}.
\end{equation}
Thus \TEV{} satisfies the non-anticipating exactness definition (without using
any additional early-stopping randomness $U$).

For any non-root node $u$, \TEV{} accepts $u$ exactly when its exit node lies
in the subtree rooted at $u$.  Using
Equation~\ref{eq:app_subtree_conservation},
\begin{equation}
    \Pr_{\TEV}(u\text{ is accepted}\mid c,\mathcal{T})
    =\sum_{v\succeq u}w(v)=P(u).
    \label{eq:app_tev_node_retention}
\end{equation}
Thus \TEV{} attains every node-wise bound in
Theorem~\ref{thm:unique_saturated_law}.  Finally,
\begin{align}
    \mathbb{E}[D_{\TEV}\mid c,\mathcal{T}]
    &=\sum_{u\in\mathcal{V}\setminus\{r\}}
      \Pr_{\TEV}(u\text{ is accepted}\mid c,\mathcal{T})\\
    &=\sum_{u\in\mathcal{V}\setminus\{r\}}P(u)
      =C_p(\mathcal{T}),\\
    \mathbb{E}[L_{\TEV}\mid c,\mathcal{T}]
    &=1+C_p(\mathcal{T}).
\end{align}
Therefore \TEV{} attains the fixed-tree ceiling and is saturated.  If the
draft tree is random, the same conclusion follows by conditioning on the
realized tree and applying Appendix~\ref{app:random_tree_conditioning}.
\end{proof}

\section{Additional Details for Exit-Guided Training}
\label{app:exit_training_details}

\subsection{One-step target-coverage gain}
\label{app:one_step_gain}

Let $v$ be an expandable node and let $a\notin\mathcal{A}(v)$ be a missing
draft token.  Write $u=(v,a)$ for the new child and
$\mathcal{T}^{+}=\mathcal{T}\oplus(v,a)$ for the tree obtained by adding only
this node.  The target prefix probabilities of all existing nodes are
unchanged, and the new node has
\begin{equation}
    P(u)=P(v)p_v(a).
\end{equation}
Therefore, by the definition of target coverage,
\begin{align}
    C_p(\mathcal{T}^{+})-C_p(\mathcal{T})
    &=P(u)=P(v)p_v(a),
\end{align}
which proves Equation~\ref{eq:one_step_marginal_gain}.  Summing over every
missing child token gives
\begin{align}
    \sum_{a\notin\mathcal{A}(v)}P(v)p_v(a)
    &=P(v)\left(1-\sum_{u\in\Ch(v)}p_v(x_u)\right)\\
    &=P(v)\rho(v)=w(v),
\end{align}
which proves Equation~\ref{eq:exit_as_expansion_gain}.  This is a gross
one-step gain before enforcing the fixed tree budget; a budget-preserving
update may also need to remove another node.

\subsection{Efficient evaluation of the exit-guided loss}
\label{app:efficient_loss}

Let $\mathcal{N}$ denote the multiset of non-root draft-tree nodes in a
minibatch, let $c(v)$ denote the training context associated with node $v$,
and let the detached weight be $\omega_v=w(v)$.  Suppose that the incoming
draft token $x_v$ is scored by the DFlash position-wise draft-logit row
$z_{\mathrm{row}(v)}\in\mathbb{R}^{|\mathcal{X}|}$, where
$\mathrm{row}(v)$ indexes the pair consisting of $v$'s minibatch example and
future position $\depth(v)$.  Within one training example, this row is shared
by all tree nodes at the same depth and is not conditioned on
$\pi(\pa(v))$.  Then
\begin{align}
    &\sum_{v\in\mathcal{N}}
      \omega_v\bigl[-\log q_{\theta,\mathrm{row}(v)}(x_v\mid c(v))\bigr]\nonumber\\
    &\quad=
      \sum_{v\in\mathcal{N}}\omega_v
      \left[
      \operatorname{logsumexp}\!\left(z_{\mathrm{row}(v)}\right)
      -z_{\mathrm{row}(v),x_v}
      \right]\\
    &\quad=
      \sum_m
      \left(
      \sum_{v\in\mathcal{N}:\,\mathrm{row}(v)=m}\omega_v
      \right)
      \operatorname{logsumexp}(z_m)
      -\sum_{v\in\mathcal{N}}
      \omega_v z_{\mathrm{row}(v),x_v}.
    \label{eq:partition_target_decomposition}
\end{align}
Thus, each distinct draft-logit row requires only one log-normalizer, and the
second term uses a gather over the selected draft-token logits.  The minibatch
normalization in Equation~\ref{eq:tree_auxiliary_loss} is applied after this
numerator is computed.  This decomposition reuses the batched draft-model
language head and avoids materializing a separate normalized vocabulary
distribution for every selected tree node.

\section{Additional Experimental Details}
\label{app:experimental_details}

\subsection{Verifier-stage efficiency}
\label{app:verifier_efficiency}

For each budget $B\in\{32,64,128\}$, we hold the drafter, builder, and realized tree fixed across verifiers and time only verifier decision and cache commit, excluding the shared target-model tree forward pass. Corollary~\ref{thm:verifier_equivalence} guarantees identical fixed-tree expected acceptance for Predraw and TEV at every budget.

\begin{table*}[t]
    \caption{Verifier-stage latency per decoding cycle on Qwen3-4B across tree budgets. The measured interval covers verifier decision and cache commit and excludes the shared target-model tree forward pass. For each $B$, both verifiers use the same drafter, builder, and realized trees.}
    \label{tab:verifier_latency}
    \vskip 0.08in
    \begin{center}
    \begin{small}
    \resizebox{\textwidth}{!}{
    \setlength{\tabcolsep}{4pt}
    \begin{tabular}{c l||cc|cc|cc|cc||cc}
        \toprule
        & & \multicolumn{2}{c|}{\textbf{MT-Bench}} & \multicolumn{2}{c|}{\textbf{HumanEval}} & \multicolumn{2}{c|}{\textbf{GSM8K}} & \multicolumn{2}{c||}{\textbf{MATH-500}} & \multicolumn{2}{c}{\textbf{Average}} \\
        \cmidrule{3-12}
        $B$ & \textbf{Verifier} & ms$\downarrow$ & red. & ms$\downarrow$ & red. & ms$\downarrow$ & red. & ms$\downarrow$ & red. & ms$\downarrow$ & red. \\
        \midrule
        \multicolumn{12}{c}{\textit{Temperature = 0.6}} \\
        \midrule
        \multirow{2}{*}{32} & Predraw & 1.371 & --- & 1.435 & --- & 1.428 & --- & 1.393 & --- &  1.407 & --- \\
          & \textbf{TEV} & \textbf{1.204} & \textbf{12.2\%} & \textbf{1.245} & \textbf{13.2\%} & \textbf{1.239} & \textbf{13.2\%} & \textbf{1.211} & \textbf{13.0\%} & \textbf{1.225} & \textbf{12.9\%} \\
        \cmidrule{1-12}
        \multirow{2}{*}{64} & Predraw & 1.392 & --- & 1.487 & --- & 1.443 & --- & 1.488 & --- & 1.453 & --- \\
          & \textbf{TEV} & \textbf{1.212} & \textbf{12.9\%} & \textbf{1.268} & \textbf{14.7\%} & \textbf{1.193} & \textbf{17.3\%} & \textbf{1.257} & \textbf{15.5\%} & \textbf{1.232} & \textbf{15.2\%} \\
        \cmidrule{1-12}
        \multirow{2}{*}{128} & Predraw & 1.620 & --- & 1.719 & --- & 1.681 & --- & 1.616 & --- & 1.659 & --- \\
          & \textbf{TEV} & \textbf{1.246} & \textbf{23.1\%} & \textbf{1.265} & \textbf{26.4\%} & \textbf{1.259} & \textbf{25.1\%} & \textbf{1.248} & \textbf{22.8\%} & \textbf{1.254} & \textbf{24.3\%} \\
        \midrule
        \multicolumn{12}{c}{\textit{Temperature = 1.0}} \\
        \midrule
        \multirow{2}{*}{32} & Predraw & 1.337 & --- & 1.368 & --- & 1.348 & --- & 1.411 & --- & 1.366 & --- \\
          & \textbf{TEV} & \textbf{1.187} & \textbf{11.2\%} & \textbf{1.235} & \textbf{9.7\%} & \textbf{1.184} & \textbf{12.1\%} & \textbf{1.197} & \textbf{15.2\%} & \textbf{1.201} & \textbf{12.0\%} \\
        \cmidrule{1-12}
        \multirow{2}{*}{64} & Predraw & 1.416 & --- & 1.428 & --- & 1.398 & --- & 1.442 & --- & 1.421 & --- \\
          & \textbf{TEV} & \textbf{1.200} & \textbf{15.3\%} & \textbf{1.240} & \textbf{13.2\%} & \textbf{1.207} & \textbf{13.7\%} & \textbf{1.240} & \textbf{14.0\%} & \textbf{1.222} & \textbf{14.0\%} \\
        \cmidrule{1-12}
        \multirow{2}{*}{128} & Predraw & 1.549 & --- & 1.545 & --- & 1.535 & --- & 1.509 & --- & 1.534 & --- \\
          & \textbf{TEV} & \textbf{1.266} & \textbf{18.3\%} & \textbf{1.276} & \textbf{17.3\%} & \textbf{1.269} & \textbf{17.3\%} & \textbf{1.277} & \textbf{15.4\%} & \textbf{1.272} & \textbf{17.1\%} \\
        \bottomrule
    \end{tabular}}
    \end{small}
    \end{center}
    \vskip -0.10in
\end{table*}

TEV reduces verifier-stage latency on every benchmark and budget. Averaged over tasks, the reduction grows from 12.9\% to 15.2\% to 24.3\% as $B$ increases from 32 to 64 to 128 at $T=0.6$; the corresponding reductions at $T=1.0$ are 12.0\%, 14.0\%, and 17.1\%. Because the drafter, realized tree, and target forward pass are fixed within each comparison, these measurements isolate execution of the common canonical law. The widening gap with tree size is consistent with Predraw's per-node sampling cost, whereas TEV uses level-wise probability propagation followed by one exit-node and one bonus-token draw. Even so, the stage-level reduction is larger than the end-to-end improvement in Table~\ref{tab:crossed_drafter_verifier} because verifier decision and cache commit constitute only a small fraction of total decoding time.

\subsection{Crossed drafter--verifier evaluation}
\label{app:crossed_experiment}

Table~\ref{tab:crossed_drafter_verifier} crosses the drafter checkpoint with the verification procedure while holding the DDTree builder, $B=64$ budget, target model, decoding distribution, and runtime configuration fixed. The comparison reports end-to-end speedup and output-block length at $T=1$; verifier-stage latency is isolated separately in Table~\ref{tab:verifier_latency}. We omit $T=0$ because its degenerate target distribution reduces verification to greedy decoding rather than the sampling regime for which TEV is designed.

\begin{table*}[t]
    \caption{Crossed drafter--verifier evaluation on Qwen3-4B at $T=1$. Each cell reports end-to-end speedup ratio ($SR$) and average output-block length including the bonus token ($\tau$). The builder and tree budget $B=64$ are fixed across all four combinations.}
    \label{tab:crossed_drafter_verifier}
    \vskip 0.08in
    \begin{center}
    \begin{small}
    \resizebox{\textwidth}{!}{
    \setlength{\tabcolsep}{4pt}
    \begin{tabular}{ll||cc|cc|cc|cc||cc}
        \toprule
        & & \multicolumn{2}{c|}{\textbf{MT-Bench}} & \multicolumn{2}{c|}{\textbf{HumanEval}} & \multicolumn{2}{c|}{\textbf{GSM8K}} & \multicolumn{2}{c||}{\textbf{MATH-500}} & \multicolumn{2}{c}{\textbf{Average}} \\
        \cmidrule{3-12}
        \textbf{Drafter} & \textbf{Verifier} & $SR\uparrow$ & $\tau\uparrow$ & $SR\uparrow$ & $\tau\uparrow$ & $SR\uparrow$ & $\tau\uparrow$ & $SR\uparrow$ & $\tau\uparrow$ & $SR\uparrow$ & $\tau\uparrow$ \\
        \midrule
        \multirow{2}{*}{Original DFlash}
        & Predraw & 3.44 & 5.04 & 5.22 & 7.53 & 5.48 & 7.85 & 5.65 & 7.88 & 4.95 & 7.08 \\
        & TEV & 3.47 & 5.07 & 5.22 & 7.52 & 5.51 & 7.88 & 5.67 & 7.90 & 4.97 & 7.09 \\
        \cmidrule{1-12}
        \multirow{2}{*}{Exit-trained DFlash}
        & Predraw & 3.95 & 5.86 & 5.61 & 8.05 & 6.44 & 9.23 & 6.22 & 9.06 & 5.55 & 8.05 \\
        & \textbf{TEV} & \textbf{3.98} & \textbf{5.86} & \textbf{5.64} & \textbf{8.06} & \textbf{6.47} & \textbf{9.25} & \textbf{6.26} & \textbf{9.08} & \textbf{5.59} & \textbf{8.06} \\
        \bottomrule
    \end{tabular}}
    \end{small}
    \end{center}
    \vskip -0.10in
\end{table*}

For a fixed drafter, the empirical $\tau$ differences between Predraw and TEV are at most 0.03 tokens, as expected for separate finite-sample runs of two saturated verifiers with the same expected output-block length. TEV nevertheless improves average $SR$ from $4.95\times$ to $4.97\times$ with the original drafter and from $5.55\times$ to $5.59\times$ with the exit-trained drafter. In contrast, changing the drafter has the dominant effect: under Predraw, exit training increases average $\tau$ by 13.7\% and $SR$ by 12.1\%; under TEV, the gains are 13.7\% and 12.5\%. This separates improved tree allocation from the smaller verifier-execution gain.

\subsection{Tree-budget robustness}
\label{app:budget_ablation}

Table~\ref{tab:budget_ablation} varies only the inference-time DDTree budget. All drafter checkpoints are fixed; in particular, ExitTrain is trained with $B=64$ and evaluated without retraining at $B\in\{32,64,128\}$. We report each benchmark separately together with the four-benchmark average.

\begin{table*}[t]
    \caption{Qwen3-4B performance across tree budgets on MT-Bench, HumanEval, GSM8K, and MATH-500. ExitTrain is trained at $B=64$ and evaluated at every budget without retraining.}
    \label{tab:budget_ablation}
    \vskip 0.08in
    \begin{center}
    \begin{small}
    \resizebox{\textwidth}{!}{
    \setlength{\tabcolsep}{4pt}
    \begin{tabular}{cl||cc|cc|cc|cc||cc}
        \toprule
        & & \multicolumn{2}{c|}{\textbf{MT-Bench}} & \multicolumn{2}{c|}{\textbf{HumanEval}} & \multicolumn{2}{c|}{\textbf{GSM8K}} & \multicolumn{2}{c||}{\textbf{MATH-500}} & \multicolumn{2}{c}{\textbf{Average}} \\
        \cmidrule{3-12}
        $B$ & \textbf{Method} & $SR\uparrow$ & $\tau\uparrow$ & $SR\uparrow$ & $\tau\uparrow$ & $SR\uparrow$ & $\tau\uparrow$ & $SR\uparrow$ & $\tau\uparrow$ & $SR\uparrow$ & $\tau\uparrow$ \\
        \midrule
        \multicolumn{12}{c}{\textit{Temperature = 0}} \\
        \midrule
        \multirow{3}{*}{32} & DDTree & 3.69 & 5.19 & 5.47 & 7.74 & 5.40 & 8.03 & 6.50 & 8.74 & 5.26 & 7.43 \\
          & Draft-OPD & 3.91 & 5.32 & 5.60 & 7.82 & 5.87 & 8.92 & 6.84 & 9.70 & 5.56 & 7.94 \\
          & \textbf{ExitTrain + TEV} & \textbf{4.11} & \textbf{5.87} & \textbf{5.72} & \textbf{8.22} & \textbf{6.68} & \textbf{9.78} & \textbf{6.94} & \textbf{9.77} & \textbf{5.86} & \textbf{8.41} \\
        \cmidrule{1-12}
        \multirow{3}{*}{64} & DDTree & 3.81 & 5.41 & 5.71 & 8.13 & 5.65 & 8.48 & 6.83 & 9.16 & 5.50 & 7.80 \\
        & Draft-OPD & 4.16 & 5.73 & 5.96 & 8.35 & 6.11 & 9.30 & 7.02 & 10.14 & 5.81 & 8.38 \\
        & \textbf{ExitTrain + TEV} & \textbf{4.28} & \textbf{6.17} & \textbf{6.09} & \textbf{8.74} & \textbf{6.91} & \textbf{10.22} & \textbf{7.23} & \textbf{10.26} & \textbf{6.13} & \textbf{8.85} \\
        \cmidrule{1-12}
        \multirow{3}{*}{128} & DDTree & 3.95 & 5.77 & 5.93 & 8.57 & 5.86 & 8.82 & 6.89 & 9.60 & 5.66 & 8.19 \\
          & Draft-OPD & 4.25 & 6.10 & 6.10 & 8.64 & 6.28 & 9.63 & 7.14 & 10.40 & 5.94 & 8.69 \\
          & \textbf{ExitTrain + TEV} & \textbf{4.40} & \textbf{6.50} & \textbf{6.28} & \textbf{9.10} & \textbf{7.15} & \textbf{10.60} & \textbf{7.43} & \textbf{10.72} & \textbf{6.31} & \textbf{9.23} \\
        \midrule
        \multicolumn{12}{c}{\textit{Temperature = 1}} \\
        \midrule
        \multirow{3}{*}{32} & DDTree & 3.31 & 4.82 & 5.00 & 7.15 & 5.24 & 7.50 & 5.39 & 7.60 & 4.73 & 6.77  \\
          & Draft-OPD & 3.56 & 4.97 & 5.34 & 7.49 & 5.58 & 8.12 & 5.86 & 8.14 & 5.09 & 7.18  \\
          & \textbf{ExitTrain + TEV} & \textbf{3.79} & \textbf{5.50} & \textbf{5.38} & \textbf{7.75} & \textbf{6.09} & \textbf{8.78} & \textbf{6.02} & \textbf{8.52} & \textbf{5.32} & \textbf{7.64} \\
        \cmidrule{1-12}
        \multirow{3}{*}{64} & DDTree & 3.44 & 5.04 & 5.22 & 7.53 & 5.48 & 7.85 & 5.65 & 7.88 & 4.95 & 7.08 \\
        & Draft-OPD & 3.74 & 5.26 & 5.49 & 7.80 & 5.88 & 8.58 & 6.08 & 8.50 & 5.30 & 7.54 \\
        & \textbf{ExitTrain + TEV} & \textbf{3.98} & \textbf{5.86} & \textbf{5.64} & \textbf{8.06} & \textbf{6.47} & \textbf{9.25} & \textbf{6.26} & \textbf{9.08} & \textbf{5.59} & \textbf{8.06} \\
        \cmidrule{1-12}
        \multirow{3}{*}{128} & DDTree & 3.50 & 5.32 & 5.42 & 7.93 & 5.74 & 8.35 & 5.75 & 8.38 & 5.10 & 7.50 \\
          & Draft-OPD & 3.80 & 5.54 & 5.81 & 8.31 & 5.99 & 8.89 & 6.17 & 8.78 & 5.44 & 7.88  \\
          & \textbf{ExitTrain + TEV} & \textbf{4.06} & \textbf{6.08} & \textbf{5.77} & \textbf{8.45} & \textbf{6.60} & \textbf{9.66} & \textbf{6.36} & \textbf{9.30} & \textbf{5.70} & \textbf{8.37} \\
        \bottomrule
    \end{tabular}}
    \end{small}
    \end{center}
    \vskip -0.10in
\end{table*}

Increasing the budget raises absolute output-block length for every method. ExitTrain + TEV improves average $\tau$ from 8.41 to 8.85 to 9.23 at $T=0$ and from 7.64 to 8.06 to 8.37 at $T=1$ as $B$ grows from 32 to 64 to 128; average $SR$ increases from $5.86\times$ to $6.13\times$ to $6.31\times$ and from $5.32\times$ to $5.59\times$ to $5.70\times$, respectively. More importantly, its advantage persists at every budget. Relative to DDTree, the gains span 11.6\%--13.8\% in $\tau$ and 11.4\%--12.9\% in $SR$; relative to Draft-OPD, they span 5.6\%--6.9\% and 4.5\%--6.2\%. Thus, the drafter trained with $B=64$ transfers to both smaller and larger inference trees without budget-specific retraining.

Unless otherwise stated, the main and crossed experiments use a tree budget of $B=64$ non-root nodes, BF16 model weights, SDPA for target-model attention, FlashAttention-2 for the drafter, batch size one, and at most 2,048 generated tokens. Every evaluation configuration uses one NVIDIA H20-96G GPU. We evaluate 128 GSM8K examples, all 164 HumanEval problems, 128 MATH-500 examples, and 80 MT-Bench questions; GSM8K, HumanEval, and MATH-500 are repeated three times and MT-Bench twice. Qwen3 models use their chat template with thinking disabled. Speedup ratios are computed against autoregressive decoding measured under the same runtime configuration as the corresponding speculative run.

\paragraph{Training configuration.}
All runs initialize from the official DFlash checkpoints released at \url{https://huggingface.co/collections/z-lab/dflash}. We train online with SpecForge using prompts from the \texttt{open-perfectblend} mixture~\citep{xu2024perfectblendredefiningrlhf}: the frozen target model generates rollout responses during training, and the drafter is updated on these target-generated trajectories. Training uses one 8$\times$H20-96G server, with one GPU assigned to target-model rollout and the remaining seven GPUs assigned to draft-model training. All models use learning rate $10^{-4}$ and batch size 4. The measured wall-clock training times on this server are approximately 35 hours for Qwen3-4B, 55 hours for Qwen3-8B, 100 hours for Gemma-4-12B-IT, and 80 hours for Qwen3-Coder-30B-A3B. Every training-objective comparison in Sec.~\ref{sec:ablation} uses the same initialization, online data pipeline, and compute budget.

\end{document}